
\documentclass[11pt]{article}

%
\usepackage[UKenglish]{babel}
\usepackage{latexsym}
\usepackage{amssymb}
\usepackage{natbib}
\usepackage{amsmath}
\usepackage{color}
\usepackage[all]{xy}
\usepackage{url}
\usepackage{charter}
\usepackage{a4wide}
\usepackage[top=10mm, bottom=10mm, left=20mm, right=20mm]{geometry}
\usepackage{stmaryrd}





\newcommand{\bis}{\mathrel{\mathchoice%
{\raisebox{.3ex}{$\,
  \underline{\makebox[.7em]{$\leftrightarrow$}}\,$}}%
{\raisebox{.3ex}{$\,
  \underline{\makebox[.7em]{$\leftrightarrow$}}\,$}}%
{\raisebox{.2ex}{$\,
  \underline{\makebox[.5em]{\scriptsize$\leftrightarrow$}}\,$}}%
{\raisebox{.2ex}{$\,
  \underline{\makebox[.5em]{\scriptsize$\leftrightarrow$}}\,$}}}}

\newcommand{\grammaris}{\hspace{1mm}::=\hspace{1mm}}
\newcommand{\grammarsep}{\hspace{1mm}\mid\hspace{1mm}}

\newcommand{\SELA}{\ensuremath{\textbf{SIG}}}
\newcommand{\DEL}{\texttt{DEL}}

\newcommand{\ETL}{\texttt{ETL}}

\newcommand{\M}{{\mathcal{M}}}
\newcommand{\K}{{\mathcal{K}}}

\newcommand{\TURN}{{\mathtt{TURN}}}
\newcommand{\DET}{{\texttt{DET}}}
\newcommand{\KE}{{\texttt{KE}}}
\newcommand{\EXTURN}{{\texttt{EXTURN}}}
\newcommand{\INFO}{{\texttt{INFO}}}

\newcommand{\N}{{\mathcal{N}}}

\newcommand{\Act}{{\textbf{A}}}
\newcommand{\B}{{\textbf{B}}}
\newcommand{\BP}{{\textbf{P}}}
\newcommand{\lr}[1]{\langle #1\rangle}
\newcommand{\rel}[1]{\stackrel{#1}{\rightarrow}}
\newcommand{\tr}[1]{\text{#1}}
\newcommand{\lra}{\leftrightarrow}

\newcommand{\DISTK}{\ensuremath{\mathtt{DISTK}}}
\newcommand{\TAUT}{\ensuremath{\mathtt{TAUT}}}
\newcommand{\GEN}[1]{\ensuremath{\mathtt{NEC}[#1]}}
\newcommand{\GENK}{\ensuremath{\mathtt{NECK}}}

\newcommand{\MP}{\ensuremath{\mathtt{MP}}}

\newcommand{\AxTr}{\ensuremath{\mathtt{T}}}
\newcommand{\AxTrans}{\ensuremath{\mathtt{4}}}
\newcommand{\AxEuc}{\ensuremath{\mathtt{5}}}

\newcommand{\G}{\ensuremath{\mathcal{G}}}
\newcommand{\Ag}{\ensuremath{\mathbf{I}}}

\newcommand{\tot}{\ensuremath{\leftrightsquigarrow}}
\newcommand{\DIST}{\ensuremath{\mathtt{DIST}[a]}}

\newcommand{\U}{\mathcal{U}}

\newcommand{\MM}{\mathbb{M}}

\newcommand{\cic}{\circledast}
\newcommand{\Ue}{a}
\newcommand{\Uf}{b}
\newcommand{\LDEL}{\ensuremath{\textbf{LDEL}}}
\newcommand{\LEL}{\ensuremath{\textbf{LEL}}}

\newcommand{\NM}{\ensuremath{\mathtt{NM}}}
\newcommand{\PR}{\ensuremath{\mathtt{PR}}}

\newcommand{\PRp}{\ensuremath{\mathbf{Pr}}}
\newcommand{\NMp}{\ensuremath{\mathbf{Nm}}}
\newcommand{\F}{\mathcal{F}}

\renewcommand{\O}{O}

\newtheorem{theorem}{Theorem}
\newtheorem{definition}[theorem]{Definition}
\newtheorem{proposition}[theorem]{Proposition}

\newtheorem{example}{Example}
\newtheorem{remark}{Remark}
\newtheorem{lemma}[theorem]{Lemma}
\newenvironment{defn}[1] {\begin{definition}[#1]} {\hfill$\lhd$ \end{definition}}

\newenvironment{proof} {\textsc{Proof}\quad} {\hfill $\Box$\\ \medskip}


\title{From rules to runs: \\A dynamic epistemic take on imperfect information games} 

\author{Kai Li and Yanjing Wang\thanks{Corresponding author}
\thanks{The authors thank Aviad Heifetz, Tai-Wei Hu, Mamoru Kaneko, Yoram Moses, R. Ramanujam, the audience at SAET 2014, and the anonymous reviewer of this journal for their useful comments on the earlier versions of this work. Yanjing Wang acknowledges the support from ROCS of SRF by Education Ministry of China and NSSF key projects 15AZX020 and 12\&ZD119.}
\\Department of Philosophy, Peking University}

\date{}

\begin{document}
\maketitle
\begin{abstract}
In the literature of game theory, the information sets of extensive form games have different interpretations, which may lead to confusions and paradoxical cases. We argue that the problem lies in the mix-up of two interpretations of the extensive form game structures: \textit{game rules} or \textit{game runs} which do not always coincide. In this paper, we try to separate and connect these two views by proposing a dynamic epistemic framework in which we can compute the runs step by step from the game rules plus the given assumptions of the players. We propose a modal logic to describe players' knowledge and its change during the plays, and provide a complete axiomatization. We also show that, under certain conditions, the mix-up of the rules and the runs is not harmful due to the structural similarity of the two. 
\end{abstract}

\section{Introduction}
\subsection{Interpretations of information sets}

Game theory studies strategic interactions between rational players in the setting of games (cf. e.g., \cite{GTbook} for the basics). When the sequential structure of players' choices is important, games are usually represented in extensive forms, e.g., the tree-like structure of Game 1 below, which denotes a game where player 1 starts by choosing action $a$ or action $b$ followed by player 2's turn to choose between $c$ and $d$. Depending on their choices, the game may end up with different outcomes in favor of player 1 or 2. 

\begin{center}
$
\xymatrix@C-20pt{
&&&1\ar[dll]|a\ar[drr]|b\\
&2\ar[dl]|c\ar[dr]|d&&&&2\ar[dl]|c\ar[dr]|d\\
o& &o'&&o'&&o\\
&&& Game\  1 &&
}
$
\qquad 
$
\xymatrix@C-20pt{
&&&1\ar[dll]|a\ar[drr]|b\\
&2\ar[dl]|c\ar[dr]|d&&&&2\ar[dl]|c\ar[dr]|d\\
o& &o'&&o'&&o\\
\save "2,2"."2,6"!C="g1"*+[F--:<+20pt>]\frm{}
\restore
&&& Game\ 2&&
}
$
\end{center}

There can be two views regarding such a tree-like structure: 
\begin{itemize}
\item Game rules: an exact and exhaustive description of the physical rules of a game, e.g., if player 1 does $a$, then player 2 is allowed to choose either $c$ or $d$ which lead to outcomes $o$ or $o'$ respectively.   
\item Game runs: a temporal structure of the potential plays of the game, e.g., there are two possible plays $ac$ and $ad$ that start with action $a$ by player 1, and these two plays end up with $o$ and $o'$ respectively depending on the action by player 2. 
\end{itemize}
Essentially, game rules tell us what moves one player can make at which stage of the game, and game runs denote a global picture of the potential plays according to the rules. Most game theorists would agree on the rule-view 
although in many expositions of extensive form games the states in the game structures are essentially histories of actions, which fits the run-view better (cf. e.g., \cite{GTbook}). The run-view is also more intuitive to logicians who are familiar with temporal logic (cf. e.g., \cite{Halpern03}). In fact, in the settings like Game 1 where players always know `where they are', these two views on game structures do not make a significant difference.  

\medskip

In contrast with games like Game 1, there is an important class of extensive form games which are called games with  \textit{imperfect information}, represented usually by the tree-like structures with `bubbles' of states belonging to the same player, such as Game 2 above. The bubbles are called `information sets' which are supposed to capture the imperfect information that the players have when they need to make a decision: the players may not know the actual state of the game, e.g., in most card games you do not know others' hands. This lack of information may affect the ability of the players in obtaining the favorable outcomes of the game. For example, if player 2 prefers outcome $o$ over $o'$,  then in Game 1 she can always make sure $o$ by choosing $c$ or $d$ wisely according to player 1's earlier choice, but she cannot do so in Game 2, since she cannot distinguish the two decision points which require different actions to guarantee $o$. 

In different contexts, different interpretations are given to information sets. For example, in Game 2 above, the information set may mean: 
\begin{enumerate}
\item `Player 2 is not \textit{informed} where she is'
\item `Player 2 cannot \textit{distinguish} two histories $a$ and $b$' 
\item `Player 2 does not \textit{observe} the choice by player 1'
\end{enumerate}
The choice of the interpretation seems innocent at the first glance, and most game theorists view information sets as part of the game rules (cf. e.g, \cite{GTbook}): 
\begin{itemize}
\item Rule-view: information sets are \textit{primitive} physical rules which describe information given to the players by the game. 
\end{itemize}
However, if we look closer, only interpretation 1 above barely fits the idea of rules: player 2 may be given the same information at two different states of the game (by an  external game master). Interpretations 2 and 3 are more about the abilities of the players rather than physical rules: whether players can remember, infer or observe during the plays of the game. Moreover, there are two crucial concepts in such games which do not fit the rule view very well: \textit{uniform strategy} and \textit{perfect recall}. Uniform strategy requires a  player to choose uniformly among all the states in her  information set. Perfect recall describes a property of the players who, roughly speaking, do not forget what she has already distinguished in the past: if two states are not in the same information set of some player then their descendants in the game are also not in the same information set for the same player. However, although the player may be given the same information according to the rules, they may learn more by reasoning cleverly  during a play. In such a case, why should they stick to the uniform strategies? On the other hand, the information given to the players according to the rules has nothing to do with the ability to recall or to forget. A moment of reflection would take you closer to the following run-view where the above concepts make more sense:
\begin{itemize}
\item Run-view: information sets describe the actual uncertainty that the players have during the plays. 
\end{itemize}
However, is this rule-vs.-run distinction just a negligible conceptual difference? Can we keep overloading the interpretations of information sets? In fact, in the literature of game theory there have been indeed doubts about the interpretation of the information sets. Most notably, \cite{PR97b,PR97a} challenged the game theorists to explain some tricky examples of imperfect information games without perfect recall. Consider the following game with a player and nature (N) who makes a random move at the beginning to either $x_1$ or $x_2$ with 0.5 probability each. The player has to choose what to do next at $x_1, x_2$ and the information set $\{x_3, x_4\}$ (it has to be a uniform strategy): 
$$\xymatrix{
&& N\ar[dl]_{0.5}\ar[dr]^{0.5}\\
2&x_1\ar[l]|s\ar[d]|b& &x_2\ar[r]|s\ar[d]|b&2\\
&x_3\ar[dl]|l\ar[d]|r & & x_4\ar[d]|l\ar[dr]|r\\
3&-6&&-2&4
\save "3,2"."3,4"!C="g1"*+[F--:<+20pt>]\frm{}
\restore
}$$
It is not hard to calculate that to maximize the expected payoff (which is 3), the player should choose $s$ at $x_1$ and $b$ at $x_2$ and $r$ at the information set $\{x_3, x_4\}$. However, when playing the game, it seems that the player  may benefit by changing the strategy conditionally: at $x_1$, she may just change to do $b$ at $x_1$ and do $l$ at the information set $\{x_3, x_4\}$; if the player is at $x_2$ she can just stick to her original strategy. In the end this conditional strategy will give the player an expected payoff of $3.5$: something goes wrong!
As \cite{Halpern97Ab} correctly pointed out, by memorizing the strategy and its changes, the player is essentially mimicking the behavior of a perfect recall player. Essentially, the player may choose differently on the information set of $\{x_3, x_4\}$: they may be able to distinguish $x_3$ and $x_4$! The information set itself does not really capture the actual indistinguishability of the player. The problem lies in the fact that what is really known and not known during the play are not fully specified by the information sets. 

Actually, the problem is not always about imperfect recall, it may go wrong with perfect recall players. For example, in the following Game 3, although player 2 has the information set, if she knows that player 1 is rational and prefers strictly $o_1$ and $o_2$ to $o_3$ or $o_4$, then she is sure that she will end up with the left state in the information set. In this case she can tell where she is after player 1's move. As another example, Game 4 illustrates a game which may go on forever.\footnote{Although in classic game theory the game structures are tree-like structures based on partial histories, computer scientists are particular interested in the infinite games which can be represented finitely, which have to involve cycles.} It is reasonable to assume that player 2 cannot distinguish the two states for the first time (according to the rules on information), but will she have the same uncertainty after reaching the states in the information sets again by doing something herself? It depends on our assumption about the player but not the game rules. 
$$
\xymatrix@C-20pt{
&&&1\ar[dll]|a\ar[drr]|b\\
&2\ar[dl]|c\ar[dr]|d&&&&2\ar[dl]|c\ar[dr]|d\\
o_1& &o_2&&o_3&&o_4\\
\save "2,2"."2,6"!C="g1"*+[F--:<+20pt>]\frm{}
\restore
&&& Game\ 3&&
}
\qquad 
\xymatrix@C-20pt{
&&&1\ar[dll]|a\ar[drr]|b\\
&2\ar@/^/@(lu,ld)|c\ar[dr]|d&&&&2\ar[dl]|c\ar@/_/@(ru,rd)|d\\
& &o_1&&o_2&&\\
\save "2,2"."2,6"!C="g1"*+[F--:<+20pt>]\frm{}
\restore
&&& Game\ 4&&
}
$$

By now we have argued that although game theorists usually take the rule-view of the imperfect information games, the concepts related to information sets are best explained in terms of runs, and the two do not always coincide. 


\subsection{The basic ideas behind our proposal}

We propose to \textit{separate} rules and runs explicitly, and study \textit{how} to compute from rules to runs with assumptions of players' information and ability. As we will see, occasionally rules and runs do coincide structurally, and this may explain why many people implicitly embrace the mix-up interpretation. In this work, we focus on the perfect recall players but the framework can be generalized to imperfect recall players, to which we will come back at the end of the paper. 

\medskip


To start with, we separate game rules and the player assumptions, and further break down each part into intuitive explicit components: 
\begin{itemize}
\item Game rules: 
\begin{itemize}
\item Game arena: at which state what actions can be done by whom with what outcomes;  
\item Game information: what information at which state should be available to whom according to the game.  
\end{itemize}
\item Player assumptions:
\begin{itemize}
\item Initial uncertainties: uncertainty about where to start;
\item Personal observability: how players see each others' actions;
\item Knowledge updating mechanism: how players handle new information.
\end{itemize}
\end{itemize}
Therefore a complete game model for generating the game runs is a tuple:  $$\lr{\text{game arena, game infomation, initial uncertainties, observation of actions, update mechanism}}$$

In this way, there is no double meaning for information sets any more: they are either the information provided by the games or the information that players learn during the plays. For example, in card games, each state of the game (based on the distributions of the cards) should at least give each player the information of his own hand, whose turn it is, and the previous cards that others played in the current round. As for the player assumptions in a card game, the players should not be  sure about the initial distribution of the game since the cards are usually drawn randomly from a deck. During the play, players may see each others' actions unless they do not pay attention to. However, if the game involves privately getting new cards from the remaining deck, the players should not be able to distinguish others' actions of getting different cards unless they cheat. Finally, the knowledge updating mechanism determines the reasoning power of the players and implicitly captures the memory of the agents. For example, although a player has exactly the same information about two states sharing the same hand of hers (she cannot see others' hands), she may still be able to distinguish those two states in the real play. For example, if Spade 3 has been played before, then a perfect recall player can exclude the state where the opponent has Spade 3 from then on, although the game rules give her exactly the same information on the state with or without Spade 3 at the opponent's hand. 


\medskip

Now, how do we compute the game runs from game rules? Roughly speaking, the computation schema is as follows: 
$$\text{Game runs = Game rules $\times$ Player assumptions} $$
More precisely, we start from the initial uncertainty, and compute the next level using the update mechanism based on the game information and observability of the players, guided by the game arena about the available actions. The idea comes from the connection between epistemic temporal logic (\ETL) and dynamic epistemic logic (\DEL) explored by \cite{MergingJournal09}, where iteratedly updating the epistemic model produces a temporal structure with epistemic relations. The exact computation is also inspired by the update product proposed in dynamic epistemic logic by \cite{BMS} where perfect recall is assumed implicitly. In general, our computation of the game runs can be viewed as an attempt towards the theory of play proposed by \cite{LG14} where what the players actually do and learn during the plays are crucial. 

\medskip

In this paper, based on the above ideas, we develop a dynamic epistemic logic to describe the properties of players during the plays. Interestingly, the logic stays the same if we define a static semantics over a special class of two-dimensional ETL models which captures the essential properties of the computed runs. We will use this connection to axiomatize our dynamic epistemic logic completely.

Another contribution of the paper is that we give some conditions under which the game rules themselves are just enough to induce the game runs modulo some structure equivalence notions. This may explain why in some cases the distinction between the game rule view and the game run view is blurred. 

\subsection{Related work}

\begin{itemize}
\item  \cite{Halpern97Ab} sharply points out the ambiguity of the interpretations of game trees with information sets, and uses interpreted systems to model the game plays in order to solve the puzzles about imperfect recall (cf. also  \cite{Halpern03}). \cite{HalpernM07} use this framework to elegantly unify solution concepts of extensive form games by very simple knowledge-based programs. However, this line of work does not distinguish the rules of the game and the player assumptions as we do. Moreover, the run models there  (interpreted systems) are global descriptions which are not computed step by step according to player assumptions. 
 \item  Based on \cite{MergingJournal09}, \cite[Chapter 9]{LG14} discusses in length on how to decorate a game tree (without uncertainty) with the information sets by DEL-like updates. Characterization results are given on generated runs modulo isomorphism. Here, we separate the uncertainty given by the game rules and information obtained by the agents based on their ability, and consider the class of run models generated by rules modulo \textit{logical equivalence} rather than isomorphism, which leaves the run models much more flexible. We do not use propositional preconditions but the game rules to guide the computation of the runs. 
\item A closely related work in the setting of general game playing is \textit{Game Description Language with Imperfect information} (GDL-II) proposed by \cite{SchiffelT14}, where game rules and the assumptions about the agents are described syntactically. Our approach is largely semantic and makes a clear distinction between rules and player assumptions. We will come back to the comparison with GDL-II at the end of the paper. 
\item  \cite{GH10} introduce \textit{concurrent informational game structures} (CIGS) with two uncertainty relations: one \textit{a priori} and one \textit{empirical} capturing the uncertainty player has at the beginning of the game (like our game information) and in the actual plays respectively. In  \cite{VGEP2014}, some general thoughts are given on the computation of the empirical uncertainty based on the a priori information of the game and player abilities. 
\item Our treatment of game information based on \textit{information pieces} came from the work by \cite{Kaneko08} on inductive game theory where the game structure is not known in advance but could be explored by the players in  different plays. \cite{Kaneko08} also treat the memory of players in terms of the past experiences (sequences of information pieces) explicitly which we do not pursue here.  
\item We make use of the general techniques developed by \cite{WC12} and \cite{WA13} to axiomatize the dynamic epistemic logic without reduction axioms, via a class of \textit{normal} run models. Note that there is no action preconditions but the game rules to guide the executions of the actions, which bring us closer to the work on dynamic epistemic logics over transition systems explored in \cite{WL12,YLW15}. 
\end{itemize}

\paragraph{Structure of the paper} Section~\ref{sec.rules} fleshes out formally the above informal ideas of the game rules and player assumptions. Section~\ref{sec.runs} shows how to compute the runs, which satisfy certain normality conditions. We also show that, under certain conditions about the players, rules and the computed runs are structurally similar. Section~\ref{sec.lansem}  introduces a simple epistemic language to talk about the properties of the game, with \emph{two} semantics based on rules (DEL-like) and normal runs (ETL-like) respectively. It is proven that the two semantics are equivalent in terms of the validities. In Section~\ref{sec.axiom} we give a complete axiomatization of our logic to clarify the underlying assumptions of our framework from a syntactic perspective. In Section~\ref{sec.con} we conclude with future directions. 

\medskip

\section{Game rules and player assumptions}
\label{sec.rules}

In this section we introduce a formal account of the game rules, initial uncertainties, personal observability, and the knowledge updating mechanism. Let us begin with the formalization of our intuitive ideas of the game rules. 

\begin{definition}[Game structure]\label{Df.Game}
Given a non-empty finite set $\Ag$ of players, a non-empty finite set of actions $\Act=\uplus_{i\in\Ag}A_{i}$ consisting of mutually disjoint sets $A_i$ for each player $i$, a finite set of information pieces $\BP$ such that $\{p^a\mid a\in \Act\}\subseteq \BP$, a \emph{game structure} w.r.t.\ $\Act$, $\Ag$, and $\BP$ is a tuple:  $\G=\lr{S,R,O}$ where:
\begin{itemize}
\item $S$ is a non-empty set of states;
\item $R$ is a partial function $S\times \Act \to S$ s.t. if $R(s,a)$ is defined, then $\iota(s)=i$ for some $i\in \Ag$, where $\iota(s)=i$ iff $\emptyset\subset\{a\mid R(s,a) \text{ is defined} \}$ $\subseteq A_{i}$;
\item $O: S \times \Ag \to 2^\BP$ is a function assigning each game state $s$ and each player $i$ a set of external information pieces that are available to $i$ such that for each $a\in A_i$, $p^a\in O(s,i)$ iff $R(s,a)$ is defined.
\end{itemize} 
$\lr{S,R}$ is called the \emph{game arena} and $O$ gives the \emph{game information}, and $\iota$ is the \emph{player assignment function} induced by $R$. 
To simplify notation, we write $sR_at$ for $R(s,a)=t$. 
\end{definition}

Given a game structure $\G$, we denote its set of states, relations and information by $S_\G$, $R^\G_{a}$ and $O_\G$. Given an $s\in S_\G$, let $e(s)$ be the set of available actions at $s$, i.e.,\ $\{a\mid R(s,a) \textrm{ is defined}\}$. Let $e_i(s)=e(s)$ if $\iota(s)=i$ otherwise $e_i(s)=\emptyset$.  
\medskip

Note that in this paper we separate the actions by different players in a syntactic way, which helps to simplify the presentation. We can read the functions $R$ and $O$ as collections of conditionals which fit the intuitive idea of game `rules': if player $i$ is at state $s$ then certain actions and information are available to $i$. By the definition of $R$, the available actions at a game state are deterministic, and they all (if any) belong to a single player. Moreover, we do not require the game structure to be a tree-like structure and cycles are allowed. 

Intuitively, the information pieces can be viewed as basic propositions and $O$ is then a player-dependent valuation about those propositions. In particular, $\{p^a\mid a\in\Act\}$ tells the agents what actions are available. Note that, unlike \cite{Kaneko08},  different players can be given different information on the same state, e.g., the players who are not to make the choices may not get the information on the available actions according to our condition on $O$. Clearly, the game information induces an equivalence relation and thus a partition over the game arena for each agent, which captures the rule-view of information sets.\footnote{A crucial difference between game information and information that player may learn later in the runs is that the equivalence relation of the game information should be common knowledge, while during the play not everything is commonly known intuitively.}

\medskip

Sometimes the players may not be sure about where they start exactly in the game, e.g., a random distribution of cards may let the players wonder about each others' hands. The following definition describes this (initial) uncertainty using Kripke model like structures w.r.t.\ a given game.

\begin{definition}[Epistemic model]\label{Df.epi}
An \textit{epistemic model} w.r.t.\ a game structure $\G$ is a tuple: $\M=(W, f, \sim, V)$ where:
\begin{itemize}
\item  $W$ is a nonempty set;
\item $f:W\to S_\G$ assigns each $w\in W$ a state in the game $\G$;
\item  $\sim: \Ag\to 2^{W\times W}$ assigns an equivalence relation over $W$ to each $i\in\Ag$ such that $w\sim_i v$ implies $O_\G(f(w),i)=O_\G(f(v),i)$.  
\item $V:W\times \Ag\to 2^\BP$ such that $V(w,i)=O_\G(f(w),i).$  
\end{itemize}
A pointed epistemic model is an epistemic model with a designated world. When $\G$ and $\BP$ are clear we use $\M$ or $(W, f, \sim, V)$  to denote an epistemic model.  
\end{definition}

Note that $\sim$ is restricted by $\O_\G$, for after all if game rules tell a player that he/she is at state $s$ instead of state $t$, then surely he/she knows. Due to our previous assumption about $\O_\G$ the players always know what they can do when they are supposed to move. Formally we can verify that $w\sim_i v$ implies $e_i(f(w))=e_i(f(v))$ for any $w, v$ in $\M$.

\medskip

In the actual plays, players may acquire more information than those provided by game rules. This is because players may be able to observe more (e.g., peeking others' hands in a card game). In this paper we use the following structure to capture the observation power in terms of indistinguishability among actions. 

\begin{definition}[Observation model]\label{Def.Upd}
An \textit{observation model} $\U$ w.r.t.\ \Act\ is a tuple $(\Act, \tot)$ where:
\begin{itemize} 
\item ${\tot}:\Ag\to 2^{\Act\times \Act}$ assigns an equivalence relation over $\Act$ to each player. 
\end{itemize}
\end{definition}

Intuitively, observation models characterize players' observation power for actions being conducted. Under this interpretation, $\Ue\tot_i \Uf$ can be read as, given actions $\Ue$ and $\Uf$, player $i$ is unable to distinguish which one has been conducted. The equivalence relations clearly induce partitions over $\Act$ for each $i$.
\footnote{Note that in our definition of $\U$, we may also assume that players can always distinguish actions conducted by themselves. However we do not make this restriction in this paper. This is not only for simplicity, but also because, for instance, players may be required to act blindfolded in certain games. Furthermore, since if every player cannot distinguish $\Ue$ and $\Uf$, then $\Ue$ and $\Uf$ can be treated as the same action with two possible outcomes, which may also allow us to describe a certain degree of indeterminacy, even though we presume determinacy in game rules.} 
 

\medskip

Let $\MM$ be the class of all the epistemic models w.r.t.\ a certain $\G$.  Given an observation model $\U$, an \emph{updater} $\circledast$ w.r.t. $\U$ is a function:  $\MM\to \MM$ such that $W_{\circledast(\M)}\subseteq W_{\M}\times \Act_\U$. Following the convention, we write $\M\cic\U$ for $\cic (\M)$ and write $\M\cic\U^k$ for $\M\cic\underbrace{\U \dots \cic\U}_k$, with $\M\cic\U^0=\M$.\\


The runs of an imperfect information game is determined by $\lr{\G, W, f, \sim, w, \U, \circledast}$. In this paper we focus on the following updater inspired by \cite{BMS}:  

\begin{defn}{Update Product $\otimes$}\index{product update}\label{df:execution}
Fixing $\G$ and thus $\Act$, an epistemic model $\M=(W, f, \sim, V)$ and an observation model $\U=(\Act,\tot)$, the product model is an epistemic model $(\M\otimes \U)=(W', f', \sim', V')$ where:\\
$\begin{array}{rcl}
W'&=& \{(w,a)\mid w\in W, a\in e(f(w))\}\\
f'((w,a))&=&R(f(w),a)\\
\sim_i'&=& \{((w,a),(u,b))\mid w\sim_i u, a\tot_i b \text{ and }\\
&&  O_\G(R(f(w),a),i)=O_\G(R(f(u),b),i)\}\\
V'((w,a),i)&=&O_\G(R(f(w),a),i) 
\end{array}$  
\end{defn} 

It is worth noting that we assume that players can perfectly recall but have limited reasoning power, that is, players' knowledge relies only on previous knowledge, personal observability and information provided by game rules. Also note that the above definition can make sure that $\M\otimes \U$ is still a well-defined epistemic model w.r.t.\ \G\ unless $W'=\emptyset$, as is shown in the following proposition. 
 
\begin{proposition}\label{Prop.UpdEpM}
Let $\G$ be any game structure, and let $\M$ be any epistemic model w.r.t. \G. Then $\M\otimes\U$ is still an epistemic model w.r.t.\ $\G$ if its domain is not empty.
\end{proposition}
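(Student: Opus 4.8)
The plan is to verify, one component at a time, that the tuple $(\M\otimes\U)=(W',f',\sim',V')$ produced by Definition~\ref{df:execution} satisfies each clause of Definition~\ref{Df.epi} relative to the same game structure $\G$, under the standing assumption $W'\neq\emptyset$. So the work splits into four checks: (i) $W'$ is a set (nonempty by hypothesis); (ii) $f'$ is a well-defined total function $W'\to S_\G$; (iii) each $\sim_i'$ is an equivalence relation on $W'$ compatible with $O_\G$; and (iv) $V'$ satisfies the valuation constraint $V'((w,a),i)=O_\G(f'((w,a)),i)$.

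For (ii): if $(w,a)\in W'$ then $a\in e(f(w))$, i.e.\ $R(f(w),a)$ is defined, so $f'((w,a))=R(f(w),a)$ is a genuine element of $S_\G$; hence $f'$ is total and well-defined. For (iv) this is then immediate by comparing the definition of $V'$ with that of $f'$: $V'((w,a),i)=O_\G(R(f(w),a),i)=O_\G(f'((w,a)),i)$. The slightly more substantial part is (iii). Reflexivity of $\sim_i'$ on $W'$ uses that $\sim_i$ is reflexive on $W$, $\tot_i$ is reflexive on $\Act$, and the $O_\G$-equality clause is trivially satisfied when $(w,a)=(u,b)$; note reflexivity needs membership in $W'$, which is exactly what lets us invoke $w\sim_i w$. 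Symmetry follows from symmetry of $\sim_i$, of $\tot_i$, and of equality. Transitivity follows from transitivity of $\sim_i$, of $\tot_i$, and of equality, applied componentwise. Compatibility with $O_\G$ — the requirement $(w,a)\sim_i'(u,b)$ implies $O_\G(f'((w,a)),i)=O_\G(f'((u,b)),i)$ — is built directly into the third conjunct of the definition of $\sim_i'$, so there is nothing to prove beyond unfolding definitions.

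There is essentially no deep obstacle here; the statement is a bookkeeping verification. The one point that deserves a word of care is the interplay between $W'$ and the reflexivity clause: the definition of $\sim_i'$ quantifies over arbitrary $w\sim_i u$ and $a\tot_i b$, but we must intersect with $W'\times W'$, and one should check that $((w,a),(u,b))\in\,\sim_i'$ already forces $(w,a),(u,b)\in W'$ — which it does, since $f'$ is only applied to pairs in $W'$ and, more directly, because for $\sim_i'$ to relate a pair we implicitly need $O_\G(R(f(w),a),i)$ to be meaningful, hence $R(f(w),a)$ defined, hence $(w,a)\in W'$. I would also remark that the hypothesis $W'\neq\emptyset$ is exactly what rules out the degenerate case where $e(f(w))=\emptyset$ for every $w\in W$ (no available actions anywhere), in which the construction would otherwise produce an epistemic model with empty domain, violating the nonemptiness clause of Definition~\ref{Df.epi}. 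With these observations in place the proof is a short routine argument clause by clause.
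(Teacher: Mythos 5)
Your proof is correct and follows essentially the same clause-by-clause verification as the paper's own proof: check that $f'$ is well defined, that each $\sim_i'$ is an equivalence relation (inherited from $\sim_i$ and $\tot_i$), that the $O_\G$-compatibility of $\sim_i'$ is built into its definition, and that the valuation constraint on $V'$ holds by unfolding definitions. The extra remarks on reflexivity and on the role of the nonemptiness hypothesis are fine elaborations of points the paper leaves implicit.
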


\noindent\begin{proof}
Let $\M=(W,f,\sim,V)$ and $\M\otimes\U=(W',f',\sim',V')$. Suppose $W'$ is not empty. Then it is easy to check, by Definition $\ref{df:execution}$ that $f'$ is a function from $W'$ to $S_{\G}$. $\sim_i$ is an equivalence relation because both $\sim_i$ and $\tot_i$ are equivalence relation.

Let $i\in \Ag$ and $(w,a),(u,b)\in W'$. First we show that 
$$(w,a)\sim_i'(u,b) \textrm{ implies } O_\G(f'(w,a),i)=O_\G(f'(u,b),i).$$ 
Suppose $(w,a)\sim_i'(u,b)$. Then $O_\G(R(f(w),a),i)=O_\G(R(f(u),b),i)$. Because $f'(w,a)=R(f(w),a)$ and $f'(u,b)=R(f(u),b)$, we obtain  $O_\G(f'(w,a),i)=O_\G(f'(u,b),i)$. 

Then we need to prove that $$V'((w,a),i)=O_\G(f'(w,a),i).$$
But this is straightforward since by Definition \ref{Def.Upd}, $f'(w,a)=R(f(w),a)$ and $V'((w,a),i)=O_\G(R(f(w),a),i)$.
\end{proof}
\section{Game runs}
\label{sec.runs}

In this section, we first define a class of epistemic temporal models with both the epistemic relations and the transitions representing actions. We use such structures to represent game runs which are computed from game rules and player assumptions. These run models are constructed, layer by layer, by the update mechanism in Definition \ref{df:execution}, such that the founding layer is the epistemic model representing the initial uncertainty, and each other layer is an updated model of the previous layer. Furthermore, between successive layers there are action relations linking states in one layer to the successor states in the next layer. We show that these run models are epistemic temporal models with certain properties.

In the second subsection, we compare game structures and run models. First we establish a result showing that it is unnecessary to distinguish run models between game rules for perfect information games, modulo p-morphism. Second, we analyze how observation power affects players' knowledge provided that the game has no initial uncertainty and everyone can perfectly recall.

Note that in this section and the following sections we will fix the players set $\Ag$, the action set $\Act$ and the set of information pieces $\BP$.

\subsection{Computing the runs}

We first introduce a 2-dimensional model incorporating both knowledge and actions to accommodate the runs of the games with uncertainty.
\begin{defn}{Epistemic temporal model}
An \emph{epistemic temporal model} $\N$ is a tuple: 
$$
(W, \{\sim_{i}\mid i\in \Ag\}, \{\rel{\Ue}\mid a\in \Act\}, V)
$$
where:
\begin{itemize} 
\item $W$ is a nonempty set;
\item for each $i\in \Ag$, $\sim _{i}$ is an equivalence relation over $W$;
\item for each $\Ue\in \Act$, ${\rel{\Ue}}\subseteq W\times W$;
\item $V$ is a function: $W\times \Ag \to 2^\BP$.
\end{itemize}
\noindent Since $\Ag$ and $\Act$ are fixed, we will write $(W, \sim, \to, V)$ for $(W, \{\sim_{i}\mid i\in \Ag\}, \{\rel{\Ue}\mid \Ue\in \Act\}, V)$. 
Furthermore, we call $(W, \sim, \to)$ the \emph{epistemic temporal frame} of $\N=(W, \sim, \to, V)$, and $(W, \sim, V)$ the \emph{epistemic core} of $\N$ (notation $\N^-$). 
For a sequence of events $h=a_0\dots a_k$ we write $w\rel{h}w'$ for $w\rel{a_0}\dots\rel{a_k}w'$.
\end{defn}

We use $Act(s)$ for $\{a\mid \exists t\in W \textrm{ such that } s\rel{a}t \}$, and $Turn(s)$ for $\{i\mid Act(s)\neq \emptyset \textrm{ and }Act(s)\subseteq A_{i} \}$. Recall that in a game structure, $e(s)=\{a\mid R(s,a) \textrm{ is defined}\}$, and $\iota(s)=i$ iff $\emptyset \subset e(s) \subseteq A_i$. $Act(s)$ and $Turn(s)$ are essentially the counterparts of $e(s)$ and $\iota(s)$ in epistemic temporal models. Note that $Turn(s)$ can be empty when $s$ enables actions from different players.  



\medskip

Now we define the run model generated by updating $\U$ repeatedly given a game structure $\G$ using the ideas in \cite{MergingJournal09}:

\begin{defn}{Run model}\label{Df.UpForest}
Given an epistemic model $\M$ w.r.t.\ a game structure $\G$. The \emph{run model generated by repeatedly updating $\U$}, written as $\F^\otimes(\M)$, is an epistemic temporal model: $$(W,\{\sim_i\mid i\in \Ag\},\{\rel{\Ue}\mid \Ue\in \Act\}, V)$$ where: 
\begin{itemize} 
\item $W=\{(s, a_1,\dots, a_k) \mid$ $0\leq k$, $(s,a_1,\dots,a_k)$ exists in $\M\otimes\U^k$ and $f_{\M}(s)R^{\G}_{a_1}t_1R^{\G}_{a_2}\dots R^{\G}_{a_k}t_k  \textrm{ for }   \textrm{some } t_1,\dots,t_k\in S_\G \}$;
\item $(s, a_1,\dots, a_k)\sim_i (t, a'_1,\dots, a'_j)$ iff 
$k=j$, and $(s,a_1,\dots,a_k)\sim_i (t,a'_1,\dots,a'_k)$ in $\M\cic \U^k$;
\item $(s, a_1,\dots, a_k)\rel{\Ue} (t, a'_1,\dots, a'_j)$ iff 
 $s=t$ and  $a'_1\dots a'_j=a_1\dots a_k a$;
\item $V((s, a_1,\dots, a_k), i)=V_{\M\otimes\U^k}((s,a_1,\dots,a_k), i)$.
\end{itemize}
The induced state assignment function $f:W\to S_\G$ w.r.t.\ $\F^\otimes(\M)$ is given by:
\begin{itemize}
\item $f((s, a_1,\dots, a_k))=f_{\M\otimes\U^k}((s, a_1,\dots, a_k))$. 
\end{itemize}
\end{defn}
Intuitively, $\F^\otimes(\M)$ is an \textit{update universe} consisting of the updated models in the form of $\M\otimes\U^k$ linked by update transitions $\rel{a}$. Note that we assume perfect recall, and our definition of $\sim_i$ represents this assumption.

\begin{remark}\label{Rem.Run}
Given an observation model $\U$ and an epistemic model $\M$ w.r.t.\ a game $\G$, 
the run model $\F^\otimes(\M)=(W,\sim,\to,V)$ and its induced state assignment function $f$ satisfy  that for all $w,u\in W$, $\Ue,\Uf\in \Act$ and $i\in\Ag$:
\begin{itemize}
 \item $f((w,a))=R_\G(f(w),a);$
 \item if $a\in e(f(w))$, then $(w,a)\in W$;
 \item if there are $s,t\in W$ such that $s\rel{\Ue} w$ and $t\rel{\Uf} u$, then:\\ $w\sim_i u$ iff $s\sim_i t$, $\Ue\tot_i \Uf$ and $O_\G(f(w),i)=O_\G(f(u),i);$
 \item $V(w,i)=O_\G(f(w),i).$
\end{itemize}  
\end{remark} 

 

We give some examples below to illustrate our ideas of the computation of game runs from game rules and player assumptions (we omit the game information $O$ and only show the induced information sets for player 2 in the game structures; singleton information sets are omitted, similarly for the reflexive epistemic arrows in the run models):\\

\begin{minipage}{0.25\textwidth}
\small{Game structure:}\\
{$\small
\xymatrix@R+10pt@C-35pt{
&&&{s:1}\ar[dll]|a\ar[drr]|b\\
&t:2\ar[dl]|c\ar[dr]|d&&&&t':2\ar[dl]|c\ar[dr]|d\\
o_1& &o_2&&o_3&&o_4
\save "2,2"."2,6"!C="g1"*+[F--:<+20pt>]\frm{}
\restore
}
$
}
\end{minipage}
\begin{minipage}{0.25\textwidth}
\small{Initial \\uncertainty:\\}
{$$\small
\xymatrix@C-20pt{
w:s\ar@(ul,ur)|{1,2}\\
{}
}
$$}
\end{minipage}
\begin{minipage}{0.25\textwidth}
\small{Observation model:}\\
{$$\small
\xymatrix@C-10pt{
{a}\ar@(ul,ur)|{1,2}&{b}\ar@(ul,ur)|{1,2}\\
{c}\ar@(ul,ur)|{1,2}&{d}\ar@(ul,ur)|{1,2}
}$$
}
\end{minipage}
\begin{minipage}{0.25\textwidth}
\qquad\small{Run model:}\\
{$\small
\xymatrix@R+10pt@C-40pt{
&&&w:s\ar[dll]|a\ar[drr]|b\\
&{wa:t}\ar[dl]|c\ar[dr]|d&&&&{wb:t'}\ar[dl]|c\ar[dr]|d\\
wac& &wad&&wbc&&wbd
}
$}
\end{minipage}
\\

\begin{minipage}{0.25\textwidth}
\small{Game structure:}\\
{$\small
\xymatrix@R+10pt@C-35pt{
&&&{s:1}\ar[dll]|a\ar[drr]|b\\
&t:2\ar@(d,l)^c\ar[dr]|d&&&&t':2\ar[dl]|c\ar@(d,r)_d\\
& &o&&o'&&
\save "2,2"."2,6"!C="g1"*+[F--:<+10pt>]\frm{}
\restore
}
$
}
\end{minipage}
\begin{minipage}{0.25\textwidth}
\small{Initial \\uncertainty:\\}
{$$\small
\xymatrix@C-20pt{
{w:s}\ar@(ul,ur)|{1,2}\\
{}
}
$$}
\end{minipage}
\begin{minipage}{0.25\textwidth}
\small{Observation model:}\\
{$$\small
\xymatrix@C-10pt{
{a}\ar@(ul,ur)|{1,2}\ar@{-}[r]|2&{b}\ar@(ul,ur)|{1,2}\\
{c}\ar@(ul,ur)|{1,2}&{d}\ar@(ul,ur)|{1,2}
}$$
}
\end{minipage}
\begin{minipage}{0.25\textwidth}
\qquad\small{Run model:}\\
{$\small
\xymatrix@R+10pt@C-40pt{
&&&w:s\ar[dll]|a\ar[drr]|b\\
&{wa:t}\ar@{-}[rrrr]^2\ar[dl]|c\ar[dr]|d&&&&wb:t'\ar[dl]|c\ar[dr]|d\\
{wac:t}\ar[d]|c\ar[dr]|d& &wad&&wbc&&{wbd:t'}\ar[d]|d\ar[dl]|c\\
&&&&&&}
$}
\end{minipage}\\

\begin{minipage}{0.25\textwidth}
\small{Game structure:}\\
{$\small
\xymatrix@R+10pt@C-30pt{
&s:1\ar[d]|a&&&&s':1\ar[d]|b\\
&t:2\ar[dl]|c\ar[dr]|d&&&&t':2\ar[dl]|c\ar[dr]|d\\
o_1& &o_2&&o_3&&o_4
\save "2,2"."2,6"!C="g1"*+[F--:<+30pt>]\frm{}
\restore
}
$
}
\end{minipage}
\begin{minipage}{0.25\textwidth}
\small{Initial \\uncertainty:\\}
{$$\small
\xymatrix@C-20pt{
w:s\ar@{-}[r]|1\ar@(ul,ur)|{1,2}&{u:s}\ar@(ul,ur)|{1,2}\ar@{-}[d]|2\\
&{v:s'}\ar@(lu,ld)|{1,2}
}
$$}
\end{minipage}
\begin{minipage}{0.25\textwidth}
\small{Observation model:}\\
{$$\small
\xymatrix@C-10pt{
{a}\ar@(ul,ur)|{1,2}&{b}\ar@(ul,ur)|{1,2}\\
{c}\ar@(ul,ur)|{1,2}&{d}\ar@(ul,ur)|{1,2}
}$$
}
\end{minipage}
\begin{minipage}{0.25\textwidth}
\qquad\small{Run model:}\\
{$\small
\xymatrix@R+10pt@C-24pt{
&w:s\ar[d]|a\ar@{-}[rr]|1&&{u:s}\ar[d]|a\ar@{-}[rr]|2&&{v:s'}\ar[d]|b\\
&{wa:t}\ar@{.}[d]\ar@{-}[rr]|1&&{ua:t}\ar@{.}[d]&&{vb:t'}\ar@{.}[d]\\
&&&&&
}
$}
\end{minipage}\\

In the following, we define a class of epistemic temporal models with respect to a given observation model $\U$, to be used in the later proofs.
\begin{definition}[Normal epistemic temporal model]
An epistemic temporal model $\N=(W,\sim,\to, V)$ is \emph{normal} w.r.t.\ $\U$ if the following properties hold for all $s,t,t'$ in $\N$, $a,b\in \Act$ and $i$ in $I$:
\begin{description} 
\item[\NMp\ (no miracles)]  if $s\sim_i s'$ and $s'\rel{\Ue}t'$ then for all $\Uf$ and $t$ such that $s\rel{\Uf}t$, $b\tot_i a$ and $V(t,i)=V(t',i)$, we have $t\sim_i t'$;
\item[\PRp\ (perfect recall)] if $s\rel{\Ue}t$ and $t\sim_i t'$ then there is an $s'$ such that $s\sim_i s'$ and $s'\rel{\Uf} t'$ for some $b$ such that $a\tot_i b$ in $\U$;
\item [(Det)] if $s\rel{\Ue}t$ and $s\rel{\Ue}t'$, then $t=t'$;
\item[(Exturn)] if $Act(s)\neq \emptyset$, then $Act(s)\subseteq A_{j}$ for some $j\in I$;
\item[(Info)] for each $a\in A_i$, $p^a\in V(s,i)$ iff $s\rel{\Ue}t'$ for some $t'\in W$;

\item[(Ke)] if $w\sim_iv$ then $V(w,i)=V(v,i)$.
\end{description}

\end{definition}

Now we show that all the generated models are normal. 
\begin{proposition}\label{Prop.RunisNem}
Given an epistemic model $\M$ w.r.t.\ a game structure $\G$, let $\F^\otimes(\M)=(W,\sim,\to, V)$ be the epistemic temporal model generated by repeatedly updating $\U$, and let $f$ be the induced state assignment function w.r.t.\ $\F^\otimes(\M)$. Then $\F^\otimes(\M)$ is a normal epistemic temporal model.
\end{proposition}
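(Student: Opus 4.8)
The plan is to check the six clauses of normality one at a time, using Remark~\ref{Rem.Run} as the workhorse: it tells us that in $\F^\otimes(\M)$ the transitions $\rel{a}$ just append $a$ to a history, that $f((w,a))=R_\G(f(w),a)$ and $V(w,i)=O_\G(f(w),i)$, and --- the key fact --- that whenever $w$ and $u$ both have $\rel{}$-predecessors, say $s\rel{a}w$ and $t\rel{b}u$, then $w\sim_i u$ \emph{iff} $s\sim_i t$, $a\tot_i b$ and $O_\G(f(w),i)=O_\G(f(u),i)$.

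First the three ``structural'' clauses. (Det) is immediate from the definition of $\rel{a}$ in Definition~\ref{Df.UpForest}: both successors of $s$ under $a$ are the single history $s$ followed by $a$. For (Exturn) and (Info) I would first note that $Act(s)=e(f(s))$: a history $(s_0,a_1,\dots,a_k)$ admits an $a$-successor exactly when $a\in e(f_{\M\otimes\U^k}(s_0,\dots,a_k))=e(f(s))$, and the $S_\G$-path side condition in Definition~\ref{Df.UpForest} is automatic, since every world occurring in $\M\otimes\U^k$ induces such a path by the definition of $\otimes$. Then (Exturn) is exactly the single-player requirement on $e(\cdot)$ built into Definition~\ref{Df.Game}, and (Info) follows by combining $V(s,i)=O_\G(f(s),i)$ with the clause ``$p^a\in O(s,i)$ iff $R(s,a)$ is defined'' of Definition~\ref{Df.Game}, evaluated at $f(s)$. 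For (Ke): $\M$ is an epistemic model and, by Proposition~\ref{Prop.UpdEpM} applied inductively, so is each non-empty $\M\otimes\U^k$; since $\sim_i$ of $\F^\otimes(\M)$ restricted to level $k$ is just $\sim_i$ of $\M\otimes\U^k$ and $V=O_\G\circ f$, the epistemic-model constraint $w\sim_i v\Rightarrow O_\G(f(w),i)=O_\G(f(v),i)$ transfers.

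For \PRp\ and \NMp\ I would feed the hypotheses into the biconditional of Remark~\ref{Rem.Run}. For \PRp: if $s\rel{a}t$ and $t\sim_i t'$, then $t$ and $t'$ lie at a common level $\ge 1$, so $t'$ is also a one-step extension, $s'\rel{b}t'$; applying the biconditional to the pairs $(s,a,t)$ and $(s',b,t')$ yields, among other things, $s\sim_i s'$ and $a\tot_i b$, which is what \PRp\ asks. For \NMp: given $s\sim_i s'$, $s'\rel{a}t'$, $s\rel{b}t$, $b\tot_i a$ and $V(t,i)=V(t',i)$, both $t$ and $t'$ have predecessors, so the biconditional applies to $(s',a,t')$ and $(s,b,t)$; its three hypotheses hold because $\sim_i$ and $\tot_i$ are symmetric (being equivalence relations) and because $O_\G(f(t'),i)=V(t',i)=V(t,i)=O_\G(f(t),i)$, giving $t\sim_i t'$.

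The only points that need a little attention are in \PRp\ and \NMp: one must observe that the worlds related by $\sim_i$ sit at a level $\ge 1$ so that the third clause of Remark~\ref{Rem.Run} is even applicable (guaranteed because $\sim_i$ only links histories of equal length and in each case one of the two is presented together with a predecessor), and one must use symmetry of the two equivalence relations to match the directions of $\sim_i$ and $\tot_i$ with the exact statements of the clauses. Everything else is a direct unfolding of Definitions~\ref{Df.Game}, \ref{df:execution} and \ref{Df.UpForest}.
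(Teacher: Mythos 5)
Your proof is correct and follows essentially the same route as the paper's: both verify the six normality clauses one by one, using Remark~\ref{Rem.Run} (equivalently, the level-wise definition of $\sim_i$ via $\M\otimes\U^k$) as the main tool for \NMp, \PRp\ and (Ke), and the identities $Act(s)=e(f(s))$ and $V=O_\G\circ f$ for (Det), (Exturn) and (Info). Your explicit attention to why the related worlds sit at level $\ge 1$ and to the symmetry of $\sim_i$ and $\tot_i$ only makes the argument slightly more careful than the paper's version.
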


\noindent\begin{proof} We check that $\F^\otimes(\M)$ satisfies all the conditions of normal epistemic temporal model. Let $$w=(w',a_1,\ldots,a_k),u=(u',b_1,\ldots,b_n),v=(v',c_1,\ldots,c_m)\in W,$$ $\Ue\in \Ag$ and $i\in\Ag$.

$\mathbf{Nm}$. Suppose $w\sim_i u$ and $u\rel{\Ue} v$. Consider any $s\in W$ and $\Uf\in\Act$ such that $w\rel{\Uf} s$, $\Ue\tot_i \Uf$ and $V(v,i)=V(s,i)$. Using Remark \ref{Rem.Run} we have $v\sim_i s$.

$\mathbf{Pr}$. Suppose $w\rel{\Ue} u$ and $u\sim_i v$. Then $n=m$, and hence $(v',c_1,\ldots,c_{m-1})\rel{c_m}v$. Using Definition \ref{Df.UpForest} we have $w\sim_i (v',c_1,\ldots,c_{m-1})$ and $\Ue\tot_i c_m$

$\mathbf{Det}$. By the definition of $W$ and $\to$, it is easy to see.

$\mathbf{Exturn}$. Suppose $\Ue\in Act(w)$. Then by Remark \ref{Rem.Run}, $f((w,a))=R_\G (f(w),a)$. It follows that, by Definition \ref{Df.Game}, $e(f(w))\subseteq A_j$ for some $j\in \Ag$, which implies, using Definition \ref{Df.UpForest}, $Act(w)\subseteq A_j$.

$\mathbf{Info}$. Suppose $a\in A_i$. Then $p^a\in V(w,i)$ iff $p^a\in O_\G(f(w),i)$ iff $R_\G (f(w),a)$ is defined, iff $w\rel{\Ue} s$ for some $s\in W$.

$\mathbf{Ke}$. Suppose $w\sim_i u$. By Remark \ref{Rem.Run} we have $V(w,i)=O_\G(f(w),i)=O_\G(f(u),i)=V(u,i)$.
\end{proof}

Later in the paper we will show that any normal epistemic temporal model can also be generated from some epistemic model modulo logical equivalence w.r.t.\ the language to be introduced later.  

\subsection{Structural comparison between game rules and run models}

In this subsection, we first use p-morphism to demonstrate that for perfect information games, the game structures are actually the  p-morphic images of their run models, no matter what observation model is assumed. Here a \textit{perfect information game} is a game structure whose induced information sets for each player are singletons, i.e., each game state is given a unique set of information pieces for each player. Next, we examine the impact of observational power on players' knowledge given perfect recall w.r.t.\ the game information and certainty of the initial state.

\medskip
The following definition of p-morphism is an adaptation of the standard one in modal logic. 

\begin{defn}{p-morphism}
Let $\M=(W, \{\sim_i\mid i\in \Ag\},\{R_a\mid a\in \Act\}, V)$ and $\N=(W',\{\sim'_i\mid i\in \Ag\}, \{R'_a\mid a\in \Act\},V')$ be two epistemic temporal models. A function $f:W\to W'$ is a \emph{p-morphism} from $\M$ to $\N$ if for all $w,u\in W$, $s\in W'$ and $a\in \Act$:
 \begin{itemize}
   \item for each $i\in \Ag$, $V(w,i)=V'(f(w),i)$;
  \item $w\sim_i u$ only if $f(w)\sim'_if(u)$;
\item if $f(w)\sim'_i s$, then there is a $v\in W$ such that $f(v)=s$ and $w\sim_i v$;
  \item $wR_a u$ only if $f(w)R'_af(u)$;
  \item if $f(w)R'_a s$, then there is a $v\in W$ such that $f(v)=s$ and $wR_a v$.
 \end{itemize}
\end{defn}
It is a standard result in modal logic that p-morphisms preserve the truth value of modal formulas (cf. \cite{mlbook}). 

\medskip

To facilitate the comparison, we can view a game structure as an  epistemic temporal model where the indistinguishability relation is induced by the information pieces.

\begin{defn}{Induced epistemic game structure}\label{Def.epiGame}
Given a game structure $\G=(S,R,O)$, the induced epistemic game structure $E(\G)$ is $(S,\sim, R, O)$ where $\sim_i=\{(s,t)\mid O(s,i)=O(t,i)\}$. A perfect information game is a game structure where  $\sim_i$ are identity relations in $E(\G)$.
\end{defn}
Note that $E(\G)$ can be viewed as an epistemic temporal structure. The following proposition shows that for perfect information games, the induced state assignment functions are p-morphisms from their run models to the induced epistemic game structures.
Also note that this also means that the state assignment functions are also p-morphisms from the run models to the game structures,  if we neglect the epistemic relations in the run models and the conditions for $\sim$ in p-morphisms. 

\begin{proposition}
Let $\G$ be a perfect information game, $\M$ be an epistemic model, $\F^{\otimes}(\M)=(W,\sim,\to, V)$ be the run model generated by repeatedly updating some observation model $\U$, and $f$ be the induced state assignment function w.r.t.\ $\F^{\otimes}(\M)$. Then $f$ is a p-morphism from $\F^{\otimes}(\M)$ to $E(\G)=(S, \sim', R,  O)$.
\end{proposition}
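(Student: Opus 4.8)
The plan is to verify, one clause at a time, that the induced state assignment function $f$ satisfies the five defining conditions of a p-morphism from $\F^{\otimes}(\M)$ to $E(\G)$, using the normality of $\F^{\otimes}(\M)$ (Proposition~\ref{Prop.RunisNem}), the summary properties in Remark~\ref{Rem.Run}, and the assumption that $\G$ is a perfect information game, i.e.\ that each $\sim'_i$ in $E(\G)$ is the identity relation. The valuation clause $V(w,i)=O(f(w),i)$ is immediate from the last item of Remark~\ref{Rem.Run}. The two ``forth'' clauses for transitions are also essentially built in: if $w\rel{a}u$ in the run model then $u=(w,a)$ by Definition~\ref{Df.UpForest}, and $f(u)=f((w,a))=R_\G(f(w),a)$ by Remark~\ref{Rem.Run}, so $f(w)R_a f(u)$ in $E(\G)$; conversely if $f(w)R_a t'$ in $\G$ then $a\in e(f(w))$, so by Remark~\ref{Rem.Run} $(w,a)\in W$ and $w\rel{a}(w,a)$ with $f((w,a))=R_\G(f(w),a)=t'$, which gives the transition ``back'' clause.

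The interesting clauses are the epistemic ones, and this is where perfect information does the work. For the ``forth'' clause, suppose $w\sim_i u$ in $\F^{\otimes}(\M)$; then by (Ke), $V(w,i)=V(u,i)$, hence $O(f(w),i)=O(f(u),i)$ by the valuation property, which is precisely $f(w)\sim'_i f(u)$ by the definition of $E(\G)$. (Note this direction does not even need perfect information.) For the ``back'' clause, suppose $f(w)\sim'_i s$ in $E(\G)$. Since $\G$ is a perfect information game, $\sim'_i$ is the identity, so $s=f(w)$; then we may simply take $v=w$, and $w\sim_i w$ by reflexivity of $\sim_i$ with $f(v)=f(w)=s$. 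So the back clause holds trivially once we invoke the perfect-information hypothesis.

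Thus the proof is short: cite Remark~\ref{Rem.Run} and Proposition~\ref{Prop.RunisNem} for the valuation and transition clauses, cite (Ke) together with the valuation property for the epistemic forth clause, and use the identity-relation characterization of perfect information games (Definition~\ref{Def.epiGame}) for the epistemic back clause. I do not anticipate a genuine obstacle here; if anything, the one point requiring a little care is making sure the transition ``back'' clause is stated for $E(\G)$ with its relation $R_a$ rather than for the raw game structure $\G$ with its partial function $R$ — but since $E(\G)$ carries exactly the same $R$, reading $f(w)R_at'$ as ``$R(f(w),a)=t'$'' and applying the second bullet of Remark~\ref{Rem.Run} settles it. A closing remark can note, as the surrounding text already does, that dropping the epistemic clauses yields the same statement for $\G$ itself.
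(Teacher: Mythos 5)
Your proposal is correct and follows essentially the same route as the paper's proof: verify the five p-morphism clauses using Remark~\ref{Rem.Run} for the valuation and transition conditions, and use the identity of $\sim'_i$ in a perfect information game to make the epistemic back clause trivial via reflexivity. The only cosmetic difference is that you derive $O(f(w),i)=O(f(u),i)$ in the epistemic forth clause from (Ke) plus the valuation property, whereas the paper reads it directly off the definition of $\sim_i$ in the run model; both establish the same fact.
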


\noindent\begin{proof}
Note that by Definition \ref{Def.epiGame}, $\G=(S,R,O)$. Let $w,u\in W$, $s\in S$ and $\Ue\in\Act$. First we check the conditions for the action transitions. Clearly $w\rel{\Ue}u$ implies $f(w)R_\Ue f(u)$ by the definition of the run model. On the other hand, suppose $f(w)R_\Ue s$ then again by the definition of the run models there is a $u=(w,a)$ in $W$ such that $f(u)=f(w,a)=R(f(w),a)=s$ by Remark \ref{Rem.Run} and the fact that $R$ is a partial function. By definition  $w\rel{\Ue}u$ in the run model. Second, we need to verify the conditions for the epistemic relations. Suppose $w\sim_iu$, then $O(f(w),i)=O(f(u),i)$ in $\G$ by the definition of epistemic relations in the run models. Now $f(w)\sim_if(u)$ in $E(\G)$ by the definition. On the other hand, suppose that $f(w)\sim'_is$ in $E(\G)$. Since $\G$ is a perfect information game $f(w)=s$. It is clear that $w\sim_iw$ in the run model.  Finally, by Remark \ref{Rem.Run}, for each $i\in \Ag$, $V(w,i)=O(f(w),i)$, and this completes our proof.
\end{proof}

Note that, in the above proof, the observation model $\U$ does not play a role under the assumption of perfect information. The above result may explain why we do not need to distinguish the run view from the rule view in perfect information games. However, for imperfect information games in general, the induced epistemic temporal models are very different from the run models which crucially depend on the observation models. There is no general structural relation like p-morphism or bisimulation between them. In the following, we unravel the game structures into game trees and induce epistemic relations on them w.r.t.\ (a version of) perfect recall to explore the similarity between game structures and run models further. In this way, we can also exam the impact of observation models precisely.

\begin{definition}[Generated epistemic game tree]\label{Df.GameTree}
Let $\G=(S,R,O)$ be a game structure, and let $s\in S$. The \emph{epistemic game tree $\G^{\upharpoonright s}$ generated from }$s$ with perfect recall is a tuple $(S',\{\frown_i\mid i\in \Ag\}, R', O')$ where
 \begin{itemize}
  \item $S'=\{(s, a_1,\dots, a_n) \mid 0\leq n \textmd{ and } sR_{a_1}t_1R_{a_2}\cdots R_{a_n}t_n \textmd{ for some } a_1,\ldots,a_n\in\Act \textrm{ and } t_1,\ldots,t_n\in S\}$;
    \item for all $w,u\in S'$ and $i\in \Ag$, $w\frown_i u$ iff there are $a_1,\ldots,a_n,b_1,\ldots,b_n\in\Act$ and $w_1,\ldots,w_n,u_1,\ldots,u_n \in S'$ s.t. $w=w_n$, $u=u_n$, $sR'_{a_1}w_1R'_{a_2}\cdots R'_{a_n}w_n$, $sR'_{b_1}u_1R'_{b_2}\cdots R'_{b_n}u_n$ and $O'(w_k,i)=O'(u_k,i)$ for each natural number $k$ with $1\leq k\leq n$;
  \item for each $\Ue\in\Act$, $R'_\Ue=\{(w,u)\in S'\times S'\mid u=(w,a)\textrm{ for some }a\in \Act\}$;
  \item for each $t=(s, a_1,\dots, a_n)\in S'$ and each $i\in\Ag$, $O'(t,i)=O(t_n,i)$, where $t_n$ is the state such that $sR_{a_1}t_1R_{a_2}\cdots R_{a_n}t_n$ for some  $t_1,\ldots,t_{n-1}\in S$.

 \end{itemize}
\end{definition}

Note that in the above definition, $\frown_i$ is computed by synchronously matching the history of the game information given to agent $i$. It is a version of prefect recall without considering the agents' observation power of actions. It can be verified that if $w\frown_iu$ and $O'((w,a),i)=O'((u,b),i)$ then $(w,a)\frown_i (u,b)$. 

\medskip

In the following definition, we introduce epistemic models without initial uncertainty. We also introduce a natural correspondence between run models and epistemic game trees. Last we give a condition on observation models which characterizes the inability of observational influence on players' knowledge.

\begin{definition}
Let $\G$ be a game structure, $\M=(W,f^*,\sim,V)$ be an epistemic model w.r.t\ $\G$, $\G^{\upharpoonright s}=(S',\frown, R',O')$ be the epistemic game tree generated from some $s\in S_\G$, $\U=(\Act,\tot)$ be an observation model, $\F^{\otimes}(\M)=(W',\sim',\to, V')$ be the run model generated by repeatedly updating $\U$, and $f$ be the induced state assignment function w.r.t.\ $\F^{\otimes}(\M)$:
\begin{itemize}
 \item $\M$ is a \emph{certainty epistemic model} if $W$ is a singleton, i.e., the starting state is common knowledge;
 \item \emph{tracking function } $g:W'\to S'$ from $\F^{\otimes}(\M)$ to $\G^{\upharpoonright s}$ is a partial function given by: $g(w,a_1,\dots,a_n)=(f^*(w),a_1,\dots,a_n)$. Note that $g$ is defined only on the states $(w,a_1,\dots,a_n)$ such that $f^*(w)=s$;
  \item $\U$ is \emph{non-informative} w.r.t.\ $\F^{\otimes}(\M)$  if for all $w,u,s,t\in W'$, $\Ue,\Uf\in\Act$ and $i\in\Ag$, such that $w\rel{a} s$, $u\rel{b} t$ and $w\sim'_i u$, if $a\not\tot_i b$,  then $V'(s,i)\neq V'(t,i)$. 
\end{itemize}
\end{definition}

Intuitively the non-informative condition says that players' observability cannot provide more information than the game information. Now we prove that the non-informative condition of observation models characterizes the observational influence on knowledge of perfect recall players provided the game runs have no initial uncertainty.

\begin{theorem}
Let $\G$ be a game structure, $\U$ be an observation model, $\M=(\{v\},f^*,\sim,V)$ be a certainty epistemic model, $\G^{\upharpoonright f^*(v)}=(S', \frown, R', O')$ be the epistemic game tree generated from $f^*(v)$, $\F^{\otimes}(\M)=(W',\sim', \to, V')$ be the run model generated by repeatedly updating $\U$, and $g$ be the tracking function from $\F^{\otimes}(\M)$ to $\G^{\upharpoonright f^*(v)}$. Then
$\U$  is non-informative w.r.t.\ $\F^{\otimes}(\M)$ iff $g$  is an isomorphism from  $\F^{\otimes}(\M) \textrm{ to } \G^{\upharpoonright f^*(v)}.$
\end{theorem}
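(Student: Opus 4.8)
The plan is to prove both directions of the biconditional, with the forward direction (non-informative $\Rightarrow$ $g$ is an isomorphism) being the substantive one. First I would observe that $g$ is always a well-defined \emph{injective} function onto its domain: since $\M = (\{v\}, f^*, \sim, V)$ has a singleton domain, every state of $\F^{\otimes}(\M)$ has the form $(v, a_1, \dots, a_n)$, and $g$ sends it to $(f^*(v), a_1, \dots, a_n)$; distinct histories map to distinct histories, so injectivity is immediate, and by Remark~\ref{Rem.Run} together with Definition~\ref{Df.GameTree} the membership conditions for $W'$ and $S'$ coincide (both say $f^*(v) R^\G_{a_1} t_1 \cdots R^\G_{a_n} t_n$ for some $t_i$), so $g$ is a bijection between $W'$ and $S'$. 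The action-transition clauses and the valuation clause of the isomorphism conditions hold \emph{unconditionally}: $(v,\vec a)\rel{a}(v,\vec a\, a)$ in $\F^{\otimes}(\M)$ iff $(f^*(v),\vec a) R'_a (f^*(v),\vec a\, a)$ in $\G^{\upharpoonright f^*(v)}$ by the definitions of $\to$ and $R'$, and $V'((v,\vec a),i) = O_\G(f(v,\vec a),i) = O'((f^*(v),\vec a),i)$ by Remark~\ref{Rem.Run} and the last clause of Definition~\ref{Df.GameTree}. So the entire content of the theorem is concentrated in the epistemic clause: $g$ is an isomorphism iff for all $w,u\in W'$ and $i$, $w\sim'_i u \iff g(w)\frown_i g(u)$.

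For the forward direction, assume $\U$ is non-informative. The $\Leftarrow$ inclusion of the epistemic clause ($g(w)\frown_i g(u) \Rightarrow w\sim'_i u$) I would prove by induction on the common length $n$ of the histories $w = (v,a_1,\dots,a_n)$ and $u = (v,b_1,\dots,b_n)$ (recall $\frown_i$ matches lengths). The base case $n=0$ is trivial since $W$ is a singleton, hence $v\sim'_i v$. For the inductive step, $g(w)\frown_i g(u)$ unpacks via Definition~\ref{Df.GameTree} to $(f^*(v),a_1,\dots,a_{n-1})\frown_i(f^*(v),b_1,\dots,b_{n-1})$ together with $O'((f^*(v),\vec a),i) = O'((f^*(v),\vec b),i)$, i.e.\ $O_\G(f(w),i) = O_\G(f(u),i)$. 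By the induction hypothesis the predecessors are $\sim'_i$-related in $\F^{\otimes}(\M)$; then I use the contrapositive of the non-informative condition: since the predecessors are $\sim'_i$-related, $w\rel{a_n}$-reachable from one and $u\rel{b_n}$-reachable from the other, and $V'(w,i) = O_\G(f(w),i) = O_\G(f(u),i) = V'(u,i)$, non-informativeness forces $a_n \tot_i b_n$; but then by Remark~\ref{Rem.Run} (the "no miracles" clause $w\sim_i u$ iff predecessors related, actions $\tot_i$-related, and $O_\G$-values equal) we conclude $w\sim'_i u$. The $\Rightarrow$ inclusion ($w\sim'_i u \Rightarrow g(w)\frown_i g(u)$) does \emph{not} need the non-informative hypothesis: it follows by a straightforward induction using the "perfect recall" half of Remark~\ref{Rem.Run}, peeling off the last action and observing that $\sim'_i$-relatedness of the successors in $\F^{\otimes}(\M)$ propagates $O_\G$-equality at every level, which is exactly what $\frown_i$ records.

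For the reverse direction, assume $g$ is an isomorphism and suppose toward a contradiction that $\U$ is not non-informative: there are $w,u,s,t\in W'$, $i\in\Ag$, $a,b\in\Act$ with $w\rel{a}s$, $u\rel{b}t$, $w\sim'_i u$, $a\not\tot_i b$, yet $V'(s,i) = V'(t,i)$. Since $g$ is an isomorphism, from $w\sim'_i u$ we get $g(w)\frown_i g(u)$; and $V'(s,i) = V'(t,i)$ gives $O'(g(s),i) = O'(g(t),i)$ via the valuation clause, which by the remark before Definition~\ref{Df.GameTree} (if $w\frown_i u$ and the $O'$-values of the one-step extensions agree then the extensions are $\frown_i$-related) yields $g(s)\frown_i g(t)$. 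Applying the isomorphism in the other direction gives $s\sim'_i t$; but $s,t$ are one-step successors of the $\sim'_i$-related $w,u$ along $a,b$, so the "no miracles"/"perfect recall" characterization in Remark~\ref{Rem.Run} forces $a\tot_i b$, contradicting $a\not\tot_i b$.

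I expect the main obstacle to be bookkeeping in the inductive step of the forward direction: one must be careful that the decomposition of $\frown_i$ in Definition~\ref{Df.GameTree} really does reduce to the \emph{immediate} predecessor relation (this uses that $R'$ is a functional tree relation so the matching sequences $w_1,\dots,w_n$ and $u_1,\dots,u_n$ are uniquely determined prefixes), and that the application of the non-informative condition is legitimate — i.e.\ that we genuinely have the needed $\sim'_i$-relation between predecessors before invoking it. Once the correct induction loading is set up, each step is a direct appeal to Remark~\ref{Rem.Run} and the two definitions, with no real computation.
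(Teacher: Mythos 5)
Your proposal is correct and follows essentially the same route as the paper's proof: both reduce the claim to the epistemic clause after noting the temporal/valuation parts of the isomorphism hold unconditionally, both prove the forward direction by induction on history length (using non-informativeness only to force $a_n\tot_i b_n$ in the $\frown_i\Rightarrow\sim'_i$ half), and both prove the converse by deriving $g(s)\frown_i g(t)$ alongside $s\nsim'_i t$ from a violating tuple to contradict the isomorphism.
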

\begin{proof}
If we ignore the epistemic relations, the temporal structure of $\F^{\otimes}(\M)$ is clearly isomorphic to the unravelling of the game structure defined in $ \G^{\upharpoonright f^*(v)}$, given that $\M$ has only one state $v$. In particular, we have $V'(u,i)=O'(g(u),i)$ for any $u\in W'$. We only need to consider the epistemic relations to establish a full isomorphism between the two. 

Left to right: Assume that $\U$ is non-informative. Consider any $w=(v,a_1,\ldots,a_m),u=(v,b_1,\ldots,b_n)\in W'$ and $i\in \Ag$ we need to show that: 
$$w\sim'_iu \iff g(w)\frown_ig(u)$$
The claim holds trivially if $m\not=n$ due to the definitions of $\sim'$ and $\frown$ based on the implicitly assumed synchronicity. Thus we only need to consider the case when $m=n$. Here an inductive proof on $n$ suffices. Suppose $n=0$,  then $w=u=v$ thus both $w\sim'_iu$ and $g(w)\frown_ig(u)$ hold trivially. Suppose that the claim holds for $n=k$. Now we show the claim also holds for the case of $n=k+1$. 

If $w\sim'_iu$, then by definition of $\sim'$: $(v,a_1,\ldots,a_k)\sim'_i (v,b_1,\ldots,b_k)$ and $V'(w,i)=V'(u,i)$ thus $O'(g(w),i)=O'(g(u),i)$. By I.H., $(f^*(v),a_1,\ldots,a_k)\frown_i (f^*(v),b_1,\ldots,b_k)$ which means up to $j\leq k$ the information on $(f^*(v),a_1,\ldots,a_j)$ and $(f^*(v),b_1,\ldots,b_j)$ step-wisely coincide. Therefore by definition of $\frown_i$, $g(w)\frown_ig(u)$. 

If $g(w)\frown_ig(u)$ then $O'(g(w),i)=O'(g(u),i)$ and $(f^*(v),a_1,\ldots,a_k)\frown_i (f^*(v),b_1,\ldots,b_k)$ by the definition of $\frown_i$. By I.H. $ (v,a_1,\ldots,a_k)\sim'_i (v,b_1,\ldots,b_k)$. If $a_{k+1}\tot_i b_{k+1}$ then $w\sim'_i v$ by definition. If $a_{k+1}\not\tot_i b_{k+1}$, by non-informativeness $V'(w,i)\not=V'(v,i)$ which contradicts the fact that $O'(g(w),i)=O'(g(u),i)$. Therefore $a_{k+1}\tot_i b_{k+1}$ and then $w\sim'_i v$. 

Right to left: Suppose towards contradiction that $\U$ is not non-informative but $g$ is an isomorphism. Then there are $w,u,s,t\in W'$, $a,b\in \Act$ and $i\in\Ag$ such that $a\not\tot_i b$, $w\sim'_i u$, $w\rel{\Ue} s$, $u\rel{\Uf} t$ and $V'(s,i)=V'(t,i)$. It follows that $s\nsim_i t$ but $O'(g(s),i)=O'(g(t),i)$. Since we assume that $g$ is an isomorphism, $g(w)\frown_ig(u)$ thus by definition $g(s)\frown_i g(t)$. Now we have $s\nsim_i t$ and $ g(s)\frown_i g(t)$, contradiction. 
\end{proof}

The above simple result shows that without initial uncertainty, if action observations do not bring more distinguishing power to the players then an unraveling of the game structure can be viewed as a run model. This also explains why in some imperfect information games with perfect recall the distinction between the run view and the rule view is also blurred.

\section{Language and semantics}
In this section, we introduce a simple logic language to talk about game runs with two equivalent (modulo validity) semantics based on epistemic models and normal epistemic temporal models respectively. 
\label{sec.lansem}
\begin{definition}
Given a non-empty finite set of players $\Ag$, a non-empty finite set $\Act$ of basic actions, and a non-empty finite set \BP\ of information pieces, the \textit{dynamic epistemic language} \LDEL\ is defined as follows:
\begin{eqnarray*}
  \phi & \grammaris & \top
           \grammarsep p_i
           \grammarsep \neg \phi
           \grammarsep (\phi \land \phi)
           \grammarsep \K_i  \phi
           \grammarsep [a]\phi
\end{eqnarray*}
where $i\in\Ag$, $p\in\BP$, and $a\in\Act$. We call the $[a]$-free part of \LDEL\ the \textit{epistemic language} (\LEL). As usual, we define $\bot$, $\phi\lor \psi $, $\phi\to\psi$, $\hat{\K}_i\phi$, and
$\lr{\Ue}\phi$ as the abbreviations of $\neg\top$,
$\neg(\neg\phi\land\neg\psi)$, $\neg\phi\lor\psi$, $\neg\K_i\neg\phi$, and $\neg
[\Ue]\neg\phi$ respectively. We use $\TURN_i$ to denote $\bigvee_{a\in A_i} \lr{a}\top$.
\end{definition}

\subsection{A dynamic epistemic semantics}


Given a game structure $\G$ and an epistemic model $\M=(W, f, \sim, V)$ w.r.t.\ $\G$, the truth value of $\LDEL$ formulas at a state $s$ in $\M$ is defined as follows:\footnote{Note that we may view the model as $\M=(\G, W, f, \sim, V)$ since $f$ assumes a game $\G$. }
\begin{center}
$
\begin{array}{|rcl|}
\hline
\M,w\vDash \top  & \Leftrightarrow &   \textrm{ always }\\
\M,w\vDash p_i  & \Leftrightarrow &w \in V(p,i) \\
\M,w\vDash \neg\phi &\Leftrightarrow& \M,w\nvDash \phi \\
\M,w\vDash \phi\land \psi &\Leftrightarrow&\M,w\vDash \phi \textrm{ and } \M,w\vDash \psi \\
\M,w\vDash \K_i \psi &\Leftrightarrow&\forall v: w\sim_i v \textrm{ implies }\M,v\vDash \psi  \\
\M,w\vDash [\Ue]\phi &\Leftrightarrow& a\in e(f(w)) \textrm{ implies }
\M\otimes\U,(w,a)\vDash \phi\\
\hline
\end{array}
$
\end{center}

\begin{definition}[$\G$-bisimulation]
Given a game structure \G, a binary relation $Z$ is called a $\G$-bisimulation between two epistemic models $\M$ and $\N$ w.r.t.\ $\G$, if whenever $wZu$ the following conditions hold:
\begin{description}
\item[Inv]  $f_{\M}(w)=f_{\N}(u)$; 
\item[Zig] if $w\sim^{\M}_i w'$ for some $w'$ in $\M$ and $i\in \Ag$, then there is a $u'$ in $\N$ such that $u\sim^\N_i u'$ and $w'Zu'$;
\item[Zag] if $u\sim^\N_i u'$ for some $u'$ in $\N$ and $i\in \Ag$, then there is a $w'$ in $\M$ such that $w\sim^\M_i w'$ and $u'Zw'$.
\end{description}
We say two pointed epistemic models $\M,s$ and $\N,t$ are $\G$-bisimilar ($\M,s\bis_{\G}\N,t$) if there is a $\G$-bisimulation $Z$ between $\M$ and $\N$ such that $sZt$.
\end{definition}

Note that the invariance condition $\mathbf{Inv}$ in $\G$-bisimulation is different to that in standard bisimulation definition, where it requires $w$ and $v$ satisfy the same proposition letters. However, our (stronger)  definition implies this requirement, for $f_{\M}(w)=f_{\N}(u)$ only if for each $i\in \Ag$, $O_\G(f_{\M}(w),i)=O_\G(f_{\N}(u),i)$, and then $V_\M(w,i)=V_\N(u,i)$.

\begin{proposition}\label{Lem.TrPrsv}
Given a game structure $\G=\lr{S,R,O}$, $\LDEL$ formulas are invariant with respect to $\vDash$ under $\G$-bisimulation.
\end{proposition}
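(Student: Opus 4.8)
The plan is to prove invariance by induction on the structure of the formula $\phi \in \LDEL$, following the standard recipe for bisimulation-invariance proofs in modal logic but carefully handling the dynamic modality $[a]$, which requires building a new $\G$-bisimulation between the updated models $\M \otimes \U$ and $\N \otimes \U$. First I would fix a $\G$-bisimulation $Z$ between $\M$ and $\N$ with $sZt$, and show $\M, w \vDash \phi \iff \N, u \vDash \phi$ for all $wZu$ simultaneously. The base cases $\top$ and $p_i$ are immediate: for $p_i$, note that $wZu$ gives $f_\M(w) = f_\N(u)$ via $\mathbf{Inv}$, hence $O_\G(f_\M(w),i) = O_\G(f_\N(u),i)$, hence $V_\M(w,i) = V_\N(u,i)$, so $w \in V_\M(p,i) \iff u \in V_\N(p,i)$ (this is exactly the remark preceding the proposition). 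The Boolean cases $\neg\phi$ and $\phi \land \psi$ are routine from the induction hypothesis, and the epistemic case $\K_i\phi$ uses $\mathbf{Zig}$ and $\mathbf{Zag}$ in the usual back-and-forth way together with the induction hypothesis.

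The substantive case is $[a]\phi$. Here I would first observe that $a \in e(f_\M(w)) \iff a \in e(f_\N(u))$, since $\mathbf{Inv}$ gives $f_\M(w) = f_\N(u)$ and $e(\cdot)$ depends only on the game state. If $a \notin e(f_\M(w))$, both sides of the biconditional hold vacuously. If $a \in e(f_\M(w))$, then $(w,a) \in W_{\M\otimes\U}$ and $(u,a) \in W_{\N\otimes\U}$, and I need $\M\otimes\U, (w,a) \vDash \phi \iff \N\otimes\U, (u,a) \vDash \phi$. The key step is to exhibit a $\G$-bisimulation $Z'$ between $\M \otimes \U$ and $\N \otimes \U$ with $(w,a) Z' (u,a)$; then the induction hypothesis (applied at the strictly smaller formula $\phi$, but now at the updated models) finishes the case. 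I would define $Z' = \{((w',a'),(u',a')) \mid w' Z u',\ a' \in e(f_\M(w'))\}$ — that is, copy the same action in both coordinates and require the underlying worlds to be $Z$-related. Note $(w,a)Z'(u,a)$ holds since $wZu$ and $a \in e(f_\M(w))$.

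It then remains to verify that $Z'$ satisfies $\mathbf{Inv}$, $\mathbf{Zig}$, $\mathbf{Zag}$. For $\mathbf{Inv}$: by Definition~\ref{df:execution}, $f_{\M\otimes\U}(w',a') = R(f_\M(w'),a')$ and $f_{\N\otimes\U}(u',a') = R(f_\N(u'),a')$, and $f_\M(w') = f_\N(u')$ from $wZu$ via $\mathbf{Inv}$ on $Z$, so these agree. For $\mathbf{Zig}$: suppose $(w',a') \sim_i^{\M\otimes\U} (w'',a'')$; by the definition of $\sim_i'$ in the update product this means $w' \sim_i^\M w''$, $a' \tot_i a''$, and $O_\G(R(f_\M(w'),a'),i) = O_\G(R(f_\M(w''),a''),i)$. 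Applying $\mathbf{Zig}$ for $Z$ to $w' \sim_i^\M w''$ yields some $u''$ with $u' \sim_i^\N u''$ and $w'' Z u''$; since $w'' Z u''$ gives $f_\M(w'') = f_\N(u'')$, the information-matching condition and the available-actions condition transfer verbatim, so $(u',a') \sim_i^{\N\otimes\U} (u'',a'')$ in $\N\otimes\U$ — wait, one must check $a'' \in e(f_\N(u''))$, which follows from $a'' \in e(f_\M(w''))$ (needed for $(w'',a'') \in W_{\M\otimes\U}$) and $f_\M(w'') = f_\N(u'')$. Then $(w'',a'') Z' (u'',a'')$ by construction. $\mathbf{Zag}$ is symmetric. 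I expect the main obstacle to be nothing deep but rather the bookkeeping in the $[a]\phi$ case: one must be careful that the candidate relation $Z'$ is actually well-defined on the domains of the updated models (i.e.\ that copying the action $a'$ keeps us inside $W_{\M\otimes\U}$ and $W_{\N\otimes\U}$), and that the three clauses defining $\sim_i'$ in the update product — the epistemic clause $w' \sim_i^\M w''$, the observational clause $a' \tot_i a''$, and the information-preservation clause — each survive transport across $Z$; the observational clause is immediate since it does not mention the models at all, and the other two reduce to $\mathbf{Inv}$ plus $\mathbf{Zig}$/$\mathbf{Zag}$ for $Z$.
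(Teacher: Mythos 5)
Your proposal is correct and follows essentially the same route as the paper's own proof: induction on formula structure, with the $[a]\phi$ case handled by lifting $Z$ to a $\G$-bisimulation $Z'$ between $\M\otimes\U$ and $\N\otimes\U$ that pairs identical actions over $Z$-related worlds, and verifying $\mathbf{Inv}$, $\mathbf{Zig}$, $\mathbf{Zag}$ exactly as you describe. The bookkeeping points you flag (well-definedness of $Z'$ on the updated domains and the transfer of the information-matching clause via $\mathbf{Inv}$) are precisely the steps the paper also checks.
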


\noindent\begin{proof}
Let $\M,s$ and $\N,t$ be two pointed epistemic models that are $\G$-bisimilar, where $\M$ and $\N$ are epistemic models w.r.t.\ $\G$, and let $Z$ be the $\G$-bisimilation. The basic case for atomic $p_i$ hold because by $\mathbf{Inv}$, $f_{\M}(s)=f_{\N}(t)$, which implies that $$V_\M(s,i)=O(f_{\M}(s),i)=O(f_{\N}(t),i)=V_\N(t,i),\textrm{ for each }i\in \Ag.$$
We only consider the case $[\Ue]\phi$. 
Recall that $\M,s\vDash [\Ue]\phi$ iff $\Ue\in e(f_{\M}(s))$ implies $\M\otimes\U,(s,a)\vDash \phi$, 
and $\N,t\vDash [\Ue]\phi$ iff $\Ue\in e(f_{\N}(t))$ implies $\N\otimes\U,(t,a)\vDash \phi$.
Since $f_{\M}(s)=f_{\N}(t)$, we can obtain that $a\in e(f_{\M}(s))$ iff $a\in e(f_{\N}(t))$. If $a\notin e(f_{\M}(s))$, then $[a]\phi$ is vacuously true in both $\M,s$ and $\N,t$. Suppose $a\in e(f_{\M}(s))$. Then clearly the domains of both $\M\otimes\U$ and $\N\otimes\U$ are not empty, which implies, by Proposition \ref{Prop.UpdEpM}, that $\M\otimes\U$ and $\N\otimes\U$ are still epistemic models w.r.t.\ \G. We prove $\M\otimes\U,(s,a)\vDash \phi$ iff $\N\otimes\U,(t,a)\vDash \phi$ by using induction hypothesis and showing that $\M\otimes\U,(s,a)$ and $\N\otimes\U,(t,a)$ are $\G$-bisimilar.

Let $Z'$ be a binary relation between $\M\otimes\U$ and $\N\otimes\U$ such that for any $(w,\Uf)$ in $\M\otimes\U$ and any $(u,\Uf')$ in $\N\otimes\U$, $$(w,\Uf)Z'(u,\Uf') \textrm{ iff } wZu \textrm{ and } \Uf=\Uf'.$$ 
Clearly $(s,a)Z'(t,a)$, and then $Z'$ is nonempty.

Let $(\M\otimes \U)=(W, f, \sim, V_1)$, $(\N\otimes \U)=(W', f', \sim', V_2)$, 
let $(w,\Uf)$ in $\M\otimes\U$ and $(u,\Uf')$ in $\N\otimes\U$ such that $(w,\Uf)Z'(u,\Uf')$.
Then $\Uf=\Uf'$, and by $wZu$, $f_{\M}(w)=f_{\N}(u)$.
It follows that $$f(w,\Uf)=R(f_{\M}(w),\Uf)=R(f_{\N}(u),\Uf')=f'(u,\Uf').$$
Hence $\mathbf{Inv}$ holds for $Z'$.

For $\mathbf{Zig}$. 
Suppose $(w,\Uf)\sim_i(w',c)$ for some $(w',c)$ in $\M\otimes\U$ and some $i\in \Ag$.
Then $$w\sim_i w', \Uf\tot_i c \textmd{ and } O(R(f_{\M}(w),\Uf),i)=O(R(f_{\M}(w'),c),i).$$ 
Because $wZu$, there is a $u'$ in $\N$ such that $u\sim_i u'$ and $w'Zu'$.
By the definition of $\G$-bisimulation, $f_{\M}(w')=f_{\N}(u')$, and then $e(f_{\M}(w'))=e(f_{\N}(u'))$, which implies $c\in e(f_{\N}(u'))$. 
It follows that $R(f_\N(u'),c)$ is defined, and for each $i\in \Ag$, $$O(R(f_{\M}(w'),c),i)=O(R(f_{\N}(u'),c),i).$$
Recall that $R(f_{\M}(w),\Uf)=R(f_{\N}(u),\Uf')$, and hence $$O(R(f_{\M}(w),\Uf),i)=O(R(f_{\N}(u),\Uf'),i).$$
Thus we have 
$$ \small{ O(R(f_{\N}(u),\Uf'),i)=O(R(f_{\M}(w),\Uf),i)=O(R(f_\M(w'),b),i)=O(R(f_\N(u'),c),i)},$$
together with $u\sim_i u'$, $b'=b$ and $\Uf\tot_i c$, we obtain  $(u,\Uf')\sim_i (u',c)$. It remains to show that $(w',c)Z'(u',c)$, but it is straightforward, for $w'Zu'$ and $c=c$.

The case $\mathbf{Zag}$ is similar. Therefore $\M\otimes\U,(s,a)$ and $\N\otimes\U,(t,a)$ are $\G$-bisimilar, and this completes our proof.
\end{proof}
\subsection{An epistemic temporal semantics}
Since we also want to talk about run models which are epistemic temporal structures, we define an alternative semantics $\Vdash$ for the language \LDEL\ on epistemic temporal models:
\begin{center}
$
\begin{array}{|rcl|}
\hline
\N,w\Vdash \top  & \Leftrightarrow &   \textrm{ always }\\
\N,w\Vdash p_i  & \Leftrightarrow &w \in V(p,i) \\
\N,w\Vdash \neg\phi &\Leftrightarrow& \N,v\nVdash \phi \\
\N,w\Vdash \phi\land \psi &\Leftrightarrow&\N,w\Vdash \phi \textrm{ and } \N,w\Vdash \psi \\
\N,w\Vdash \K_i \phi &\Leftrightarrow&\forall v: w\sim_i v$ implies $ \N,w\Vdash \phi  \\
\N,w\Vdash [\Ue]\phi &\Leftrightarrow&\forall v: w\rel{\Ue}v \textrm{ implies } \N,v\Vdash \phi \\
\hline
\end{array}
$
\end{center}
Recall that given an epistemic temporal model $\N$, $\N^-$ is its induced epistemic core. By definition, the two semantics coincide on \LEL\ formulas ($[a]$-free):
\begin{proposition}\label{prop.coEL}
For any \LEL\ formula $\phi$ and any pointed epistemic temporal model $\N,w$: $\N,w\Vdash\phi\iff\N^-,w\vDash\phi$.
\end{proposition}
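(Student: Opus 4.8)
The plan is to prove this by a routine induction on the structure of the $[a]$-free formula $\phi$, the point being that every semantic clause for such a formula refers only to the carrier $W$, the epistemic relations $\sim_i$, and the valuation $V$ --- all of which $\N$ and $\N^-$ share by definition of the epistemic core --- and never to the transition relations $\rel{\Ue}$, nor (on the $\vDash$ side) to the state-assignment function $f$ or an underlying game structure. Before starting the induction I would note explicitly that $f$ enters the $\vDash$-semantics only through the clause for $[\Ue]\phi$ (via $e(f(w))$ and the update product $\otimes$), so that on the \LEL\ fragment the relation $\vDash$ is in fact the ordinary multi-agent Kripke semantics on the triple $(W,\sim,V)=\N^-$; this dispels the apparent type mismatch between $\vDash$, which is introduced for epistemic models carrying an $f$ into a game structure, and $\N^-$, which is the bare tuple $(W,\sim,V)$.

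For the base cases, $\N,w\Vdash\top$ and $\N^-,w\vDash\top$ both hold unconditionally, and $\N,w\Vdash p_i$ iff $p\in V(w,i)$ iff $\N^-,w\vDash p_i$, since the two models carry the same $V$. For the inductive step I would handle $\neg$, $\land$ and $\K_i$. In each case the defining clause for $\Vdash$ and the defining clause for $\vDash$ are literally the same once one substitutes $\vDash$ for $\Vdash$: for instance $\N,w\Vdash\K_i\psi$ iff $\N,v\Vdash\psi$ for every $v$ with $w\sim_i v$, while $\N^-,w\vDash\K_i\psi$ iff $\N^-,v\vDash\psi$ for every such $v$; since $\sim_i$ is the same relation in $\N$ and $\N^-$, applying the induction hypothesis to $\psi$ at each $\sim_i$-successor of $w$ closes the case, and $\neg$ and $\land$ are identical. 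As $\phi$ is $[a]$-free, no clause for $[\Ue]$ ever arises, so the induction terminates and we obtain $\N,w\Vdash\phi\iff\N^-,w\vDash\phi$ for all \LEL\ formulas $\phi$.

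I do not expect any genuine obstacle: the statement is essentially the observation that the $[a]$-modality is the only place where $\Vdash$ and the $\vDash$-semantics of \LDEL\ can ``see'' any structure beyond the epistemic core, and that this core is literally shared between $\N$ and $\N^-$. The one spot deserving a sentence of care is precisely the remark above that $f$ and the ambient game structure are irrelevant on the $[a]$-free fragment; once that is spelled out, the two semantics are visibly given by identical recursive clauses over the common data $(W,\sim,V)$, and the proposition follows immediately.
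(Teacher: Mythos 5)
Your proof is correct and matches the paper's intent: the paper states this proposition without proof, noting only that the two semantics coincide ``by definition'' on the $[a]$-free fragment, and your routine induction (plus the useful clarifying remark that $f$ and the ambient game structure only enter the $\vDash$-semantics through the $[a]$-clause) is exactly the argument being elided.
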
 
Now we induce a game from an epistemic temporal model.   
\begin{definition}\label{Def.GamePart} Let $\N=(W,\sim,\to, V)$ be any epistemic temporal model, the game \emph{induced by} $\N$ is a structure $$\G_{\N}\ =(W,R,O)$$ where $O=V$ and $R\subseteq W\times \Act \times W$ is a relation such that for all $s,t\in W$ and $a\in \Act$, $\left\langle s,a,t\right\rangle \in R$ iff $s\stackrel{a}{\rightarrow}t$.
\end{definition}
Next, we show that $\G_\N$ is indeed a game structure if $\N$ is normal.
\begin{lemma} \label{game part} Let $\N=(W,\sim,\to, V)$ be any normal epistemic temporal model, and let $\G_{\N}\ =(W,R,O)$ be the game induced by  $\N$. Then $\G_{\N}$ is a game structure w.r.t. $\Act$ and $\Ag$; furthermore, $e(s)=Act(s)$ for each $s\in W$.
\end{lemma}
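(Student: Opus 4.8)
The plan is to unfold Definition~\ref{Def.GamePart} and verify each clause of Definition~\ref{Df.Game} for $\G_\N=(W,R,O)$, reading off exactly which normality conditions of $\N$ are needed. So first I would recall that, by construction, $R$ is the relation given by $\langle s,a,t\rangle\in R$ iff $s\rel{a}t$, and $O=V$. The statement that $R$ is a \emph{partial function} $S\times\Act\to S$ is precisely determinism of the action relations: if $sR_at$ and $sR_at'$ then $t=t'$, which is exactly condition $\mathbf{(Det)}$. This already gives the identification $e(s)=Act(s)$: in $\G_\N$ the set $\{a\mid R(s,a)\text{ is defined}\}$ is by definition $\{a\mid\exists t:s\rel{a}t\}=Act(s)$, and since $R$ is now a genuine partial function this matches the $e(\cdot)$ notation from Definition~\ref{Df.Game}.

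Next I would check the side condition on $R$ in Definition~\ref{Df.Game}: whenever $R(s,a)$ is defined, we must have $\iota(s)=i$ for some $i$, i.e.\ $\emptyset\subset\{a\mid R(s,a)\text{ defined}\}\subseteq A_i$. Translating, this says: if $Act(s)\neq\emptyset$ then $Act(s)\subseteq A_j$ for some $j\in\Ag$, which is exactly condition $\mathbf{(Exturn)}$. So $\iota$ is well-defined on $\G_\N$ and coincides with $Turn$. For the condition on $O$, we need: for each $a\in A_i$, $p^a\in O(s,i)$ iff $R(s,a)$ is defined. Since $O=V$ and $R(s,a)$ defined iff $s\rel{a}t$ for some $t$, this is precisely condition $\mathbf{(Info)}$. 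The remaining structural requirements of Definition~\ref{Df.Game} --- $S=W$ nonempty (inherited from $\N$), and $O:S\times\Ag\to 2^\BP$ being a function (inherited from $V$) --- are immediate. I would note in passing that conditions $\mathbf{Nm}$, $\mathbf{Pr}$, and $\mathbf{Ke}$ are \emph{not} needed here; they constrain the epistemic relation $\sim$, which plays no role in the game structure $\G_\N$ (the epistemic part re-enters only when one later builds an epistemic model over $\G_\N$, e.g.\ via $E(\G_\N)$ or in the semantic correspondence).

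This proof is essentially a dictionary translation between the normality axioms and the defining clauses of a game structure, so there is no real obstacle --- the only thing to be careful about is bookkeeping: making sure that the notational conventions $e(s)$, $\iota(s)$ attached to $\G_\N$ (which are defined purely in terms of $R$) are shown to agree with $Act(s)$, $Turn(s)$ (defined purely in terms of $\to$), and that this agreement is used consistently. I would organize the write-up as: (i) $R$ is a partial function, by $\mathbf{(Det)}$, hence $e(s)=Act(s)$; (ii) the player-assignment side condition, by $\mathbf{(Exturn)}$, hence $\iota=Turn$; (iii) the $O$-condition, by $\mathbf{(Info)}$; (iv) a one-line remark that the rest is inherited. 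The mild subtlety worth flagging is that Definition~\ref{Df.Game} requires $\Act=\uplus_{i\in\Ag}A_i$ with the $A_i$ mutually disjoint and that these are fixed in advance (stated at the start of Section~\ref{sec.runs}), so $\mathbf{(Exturn)}$ genuinely pins down a \emph{unique} $j$, making $\iota$ single-valued as demanded.
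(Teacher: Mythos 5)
Your proof is correct and follows essentially the same route as the paper's: a clause-by-clause verification of Definition~\ref{Df.Game} using exactly the normality conditions $\mathbf{(Det)}$, $\mathbf{(Exturn)}$, and $\mathbf{(Info)}$, with the identification $e(s)=Act(s)$ falling out of the definitions. The added remarks (that $\mathbf{Nm}$, $\mathbf{Pr}$, $\mathbf{Ke}$ are unused, and that disjointness of the $A_i$ pins down a unique $j$) are accurate but do not change the argument.
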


\noindent \begin{proof}
Clearly $W$ is a non-empty set and bear in mind below that $O=V$.  Firstly we prove that $R$ is a partial function from $W\times \Act$ to $W$.
Let $a\in \Act$ and $s,t,t'\in W$ such that $s\rel{\Ue}t$ and $s\rel{\Ue}t'$. 
Since $\N$ is normal, by $(\textbf{Det})$, we have $t=t'$.
It follows that $R$ is a partial function from $W\times \Act$ to $W$. 

Secondly we show that if $R(s,a)$ is defined, then $\emptyset \subset Act(s)\subseteq A_i$.
Suppose $R(s,a)$ is defined for some $s\in W$, $i\in \Ag$ and $a\in A_i$. Then $Act(s)$ is non-empty. Because $\N$ is normal, by $(\textbf{Exturn})$ $Act(s)\subseteq A_j$ for some $j\in \Ag$. Since $a\in A_i$ and $a\in Act(s)$, we have $i=j$.

Then we demonstrate that for all $s\in W$, $i\in \Ag$ and $a\in A_i$, $p^a\in V(s,i)$ iff $R(s,a)$ is defined. Consider any $s\in W$, $i\in \Ag$ and $a\in A_i$. By $(\textbf{Info})$ $p^a\in V(s,i)$ iff $s\rel{\Ue} t$ for some $t\in W$, iff, by the definition of $R$, $\lr{s,a,t}\in R$, i.e.,\ $R(s,a)$ is defined. 

Since we have already proven that $\G_{\N}$ is indeed a game structure, for each $s\in W$, $e(s)$ is well-defined. Thus lastly we can show that $e(s)=Act(s)$. But this is straightforward according to the definition of $e(s)$ and $Act(s)$.
\end{proof}
There is also an epistemic model hidden in an epistemic temporal model. 
\begin{definition}
Let $\N=(W,\sim,\to, V)$ be any epistemic temporal model, the \emph{epistemic part induced by} $\N$ is a tuple $\M_{\N}=\lr{W,f,\sim,V}$ where $f$ is the identity function on $W$.
\end{definition}

\begin{lemma}\label{epiPrt}
Let $\N=(W,\sim,\to, V)$ be any normal epistemic temporal model, and let $\M_{\N}=\lr{W,f,\sim,V}$ be the epistemic part induced by $\N$. Then $\M_{\N}$ is an epistemic model w.r.t.\ the game structure induced by $\N$.
\end{lemma}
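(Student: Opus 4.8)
The plan is to verify directly that $\M_{\N}=\lr{W,f,\sim,V}$, with $f$ the identity on $W$, satisfies the four clauses of Definition~\ref{Df.epi} relative to the game structure $\G_{\N}=(W,R,O)$ induced by $\N$. Note first that $\G_{\N}$ is a genuine game structure w.r.t.\ $\Act$ and $\Ag$ by Lemma~\ref{game part}, so it makes sense to speak of epistemic models over it; also recall from that lemma that $O=V$ and that the available-action sets agree, $e(s)=Act(s)$ for every $s\in W$. These two facts will do essentially all the work, since the coherence conditions in Definition~\ref{Df.epi} are phrased in terms of $O_{\G}$ and $f$, and here both collapse to the data already present in $\N$.

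Concretely, I would check the clauses in order. (i) $W$ is nonempty by the definition of an epistemic temporal model. (ii) $f$ is the identity, hence a (total) function $W\to W=S_{\G_{\N}}$, so it assigns each world a state of $\G_{\N}$. (iii) Each $\sim_i$ is an equivalence relation on $W$ because $\N$ is an epistemic temporal model; the required compatibility $w\sim_i v \Rightarrow O_{\G_{\N}}(f(w),i)=O_{\G_{\N}}(f(v),i)$ unwinds, via $f=\mathrm{id}$ and $O=V$, to $w\sim_i v \Rightarrow V(w,i)=V(v,i)$, which is exactly the normality condition $(\textbf{Ke})$. (iv) The valuation requirement $V(w,i)=O_{\G_{\N}}(f(w),i)$ again unwinds to $V(w,i)=V(w,i)$ by $f=\mathrm{id}$ and $O=V$, so it holds trivially.

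There is no substantial obstacle here; the only thing that needs care is bookkeeping — making sure we invoke Lemma~\ref{game part} before talking about $O_{\G_{\N}}$, and correctly citing $(\textbf{Ke})$ for the $\sim$-compatibility clause rather than trying to reprove it. So the ``hard part'', such as it is, is purely a matter of stating clearly which piece of earlier machinery discharges each clause. I would present it as a short verification: first recall $\G_{\N}$ is a game structure with $O=V$ by Lemma~\ref{game part}, then run through the four bullet points of Definition~\ref{Df.epi}, using only $(\textbf{Ke})$ and the identities $f=\mathrm{id}$, $O=V$.
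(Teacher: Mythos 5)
Your proposal is correct and follows essentially the same route as the paper's own proof: invoke Lemma~\ref{game part} to get that $\G_{\N}$ is a game structure with $O=V$, then discharge the compatibility clause of Definition~\ref{Df.epi} via $(\textbf{Ke})$ and the valuation clause via $f=\mathrm{id}$. No gaps.
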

\begin{proof}
By Lemma \ref{game part} the game induced by $\N$ is a game structure. We only need to prove that for all $s,t\in W$, and $i\in\Ag$, $V(s,i)=V(f(s),i)$, and that $s\sim_{i}t$ only if $V(f(s),i)=V(f(t),i)$.
Let $s,t\in W$, and $i\in\Ag$. Because $f$ is an identity function, we have $V(s,i)=V(f(s),i)$. Furthermore by  $(\textbf{Ke})$ if $s\sim_{i}t$, then $V(s,i)=V(t,i)$, which implies that $V(f(s),i)=V(f(t),i)$.
\end{proof}

The following lemma is crucial to the later result which connects the two semantics.
\begin{lemma}\label{Lem.Bis}
Let $\N=(W,\sim,\to, V)$ be any normal epistemic temporal model, $\G\ =(W,R,O)$ be the game induced by $\N$, let $\M=\lr{W,f,\sim,V}$ be the epistemic part  induced by $\N$. Then $s\rel{\Ue}t$ in $\N$ only if 
$$\M\otimes\U,(s,a)\bis_{\G}\M,t.$$ 
\end{lemma}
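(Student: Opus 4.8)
The plan is to exhibit an explicit $\G$-bisimulation $Z$ between $\M\otimes\U$ and $\M$ that links $(s,a)$ to $t$ whenever $s\rel{a}t$ in $\N$. Since $\M=\langle W,f,\sim,V\rangle$ with $f$ the identity on $W$, and $\G=\G_\N=(W,R,O)$ with $R$ the relation coming from $\to$, the natural candidate is
$$Z=\{\,((w,b),u)\mid w\rel{b}u \text{ in }\N\,\}\subseteq W_{\M\otimes\U}\times W.$$
First I would note that $(s,a)Z t$ holds by hypothesis, so $Z$ is nonempty, and that $Z$ is well-defined into $\M\otimes\U$: if $w\rel{b}u$ then $b\in Act(w)=e(f_\M(w))$ by Lemma~\ref{game part}, so $(w,b)$ is indeed a world of $\M\otimes\U$, and its $f$-value is $R(f_\M(w),b)=u=f_\M(u)$, giving $\mathbf{Inv}$ immediately.

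Next I would verify $\mathbf{Zig}$ and $\mathbf{Zag}$. For $\mathbf{Zig}$: suppose $(w,b)\sim_i(w',c)$ in $\M\otimes\U$, where $w\rel{b}u$. By Definition~\ref{df:execution} this means $w\sim_i w'$ in $\M$ (equivalently in $\N$), $b\tot_i c$, and $O_\G(R(f_\M(w),b),i)=O_\G(R(f_\M(w'),c),i)$; note $(w',c)$ being a world of $\M\otimes\U$ forces $c\in e(f_\M(w'))$, i.e.\ $w'\rel{c}u'$ for $u'=R(f_\M(w'),c)$ in $\N$. Now I want to match $u=R(f_\M(w),b)$ with $u'$ under $\sim_i$ in $\M$. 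We have $w\sim_i w'$, $b\tot_i c$, and $V(u,i)=V(u',i)$ (since $V=O$ and the $O$-values agree); applying the \textbf{No miracles} ($\mathbf{Nm}$) property of the normal model $\N$ — with $s:=w$, $s':=w'$, $t':=u'$ via $w'\rel{c}u'$, and $t:=u$ via $w\rel{b}u$, $c\tot_i b$ — yields $u\sim_i u'$ in $\N$, hence in $\M$. Finally $(u,c)$... rather, the target world is $u'$, and $(w',c)Z u'$ since $w'\rel{c}u'$, so $\mathbf{Zig}$ holds. For $\mathbf{Zag}$: suppose $u\sim_i u'$ in $\M$ where $w\rel{b}u$; I need a partner $(w',c)\sim_i(w,b)$ in $\M\otimes\U$ with $w'\rel{c}u'$. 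Apply \textbf{Perfect recall} ($\mathbf{Pr}$) to $w\rel{b}u$ and $u\sim_i u'$: there is $w'$ with $w\sim_i w'$ and $w'\rel{c}u'$ for some $c$ with $b\tot_i c$. Then $c\in Act(w')=e(f_\M(w'))$, so $(w',c)$ is a world of $\M\otimes\U$; and $u\sim_i u'$ gives $V(u,i)=V(u',i)$ by $(\mathbf{Ke})$, i.e.\ $O_\G(R(f_\M(w),b),i)=O_\G(R(f_\M(w'),c),i)$, so by Definition~\ref{df:execution} $(w,b)\sim_i(w',c)$; and $(w',c)Z u'$. This gives $\mathbf{Zag}$.

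I expect the main obstacle to be purely bookkeeping: carefully tracking which of the three data packaged in $\sim_i$ of $\M\otimes\U$ (the base relation, the $\tot_i$-link on actions, the equality of $O$-values) is hypothesis and which is conclusion, and invoking exactly the right direction of $\mathbf{Nm}$ versus $\mathbf{Pr}$ — in particular getting the quantifier structure of $\mathbf{Nm}$ right (it quantifies over all $b,t$ with the matching action and valuation, which is precisely what we need since our target action $b$ and target state $u$ are already fixed). There is also a small point to check that the two notions of $\sim_i$ agree between $\N$ and its epistemic part $\M$, which is immediate because $\M_\N$ copies $\sim$ verbatim. No separate argument is needed for the atomic/propositional clause of bisimulation, since as remarked after the definition of $\G$-bisimulation, $\mathbf{Inv}$ already subsumes agreement on all $p_i$.
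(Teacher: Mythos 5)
Your proposal is correct and follows essentially the same route as the paper's own proof: the same relation $Z$ (linking $(w,b)$ to $u$ exactly when $w\rel{\Uf}u$), the same verification of $\mathbf{Inv}$ via the identity of $f$, and the same use of $\NMp$ for $\mathbf{Zig}$ and of $\PRp$ together with $(\mathbf{Ke})$ for $\mathbf{Zag}$. The bookkeeping you flag (which components of $\sim_i'$ are hypotheses versus conclusions) is handled exactly as in the paper.
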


\noindent\begin{proof}
It is useful to note that for all $w,u\in W$ and $b\in \Act$, $w\rel{\Uf} u$ in $\N$ iff $wR_b u$ in $\G$, which we will use repeatedly in the following proof. By Lemma \ref{game part}, and Lemma \ref{epiPrt}, $\G$ and $\M$ are game structures and epistemic model w.r.t.\ $\G$.
Let $s\rel{\Ue}t$, let $\M\otimes\U=(W',f',\sim',V')$, 
and let $Z$ be the binary relation between $\M\otimes\U$ and $\M$ such that $(w,b)Zu$ iff $wR_{\Uf}u$ in $\G$.
Since $s\rel{\Ue}t$, we have $sR_{\Ue}t$. It follows that $Z$ is nonempty.

Let $w,u\in W$ and $b\in \Act$ such that $wR_{\Uf}u$.
We now check the three conditions of $\G$-bisimulation bearing in mind that $O=V$.

For $\textbf{Inv}$. First we have $f'(w,b)=R(f(w),b)$. 
Since $wR_b u$, $R(w,b)=u$, and because $f$ is an identity function, we have $$f'(w,b)=R(f(w),b)=R(w,b)=u=f(u).$$

For $\mathbf{Zig}$. Suppose $(w,b)\sim'_{i}(w',b')$ in $\M\otimes\U$ for some $i\in \Ag$. Let $u'$ be the state such that $w'R_{\Uf'}u'$. 
By the definition of $\sim'$, we have $w\sim_{i}w'$, $b\tot_{i}b'$ and $V(R(f(w),b),i)=V(R(f(w'),b'),i)$.
Because $R(w,\Uf)=u$, $R(w',\Uf')=u'$ and that $f$ is an identity function, we can get that $V(u,i)=V(u',i)$. Hence we have $$w\rel{\Uf}u,w\rel{\Uf'}u',w\sim_{i}w',b\tot_{i}b',V(u,i)=V(u',i),$$ and then by ($\NMp$), we obtain $u\sim_i u'$.

For $\mathbf{Zag}$. Suppose $u\sim_i u'$ for some $i\in \Ag$ and $u'\in W$. 
Then by $(\textbf{Ke})$ $V(u,i)=V(u',i)$.
Furthermore, by ($\PRp$) there is a $w'\in W$ and a $b'\in \Act$ such that $b\tot_i b'$, $w\sim_i w'$ and $w'\rel{\Uf'}u'$. 
Because $R(w,b)=u$, $R(w',b')=u'$ and $f$ is an identity function, we can acquire that $V(R(f(w),b),i)=V(R(f(w'),b'),i)$, and then by the definition of $\sim'$ we have $(w,b)\sim'_{i}(w',b')$.

Therefore $\M\otimes\U,(s,a)$ and $\M,t$ are $\G$-bisimilar.
\end{proof}
\subsection{The equivalence of the two semantics}
Now we are ready to establish the equivalence of the two semantics in two cases demonstrated by the following two theorems: one between a normal epistemic temporal model and its induced epistemic model, and another is between an epistemic model and its generated run model. It then follows that the valid formulas w.r.t.\ these two semantics are the same. 
\begin{theorem}\label{TH.Eqv}
Let $\N=(W,\sim,\to, V)$, let $\G=(S,R,O)$ be the game structure induced by $\N$, let $\M=\lr{W,f,\sim,V}$ be the epistemic model induced by $\N$. Then for any $\LDEL$ formula $\phi$ and any $s\in W$ $$\N,s\Vdash\phi \Longleftrightarrow \M,s\vDash\phi.$$
\end{theorem}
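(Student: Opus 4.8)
The plan is to prove the equivalence $\N,s\Vdash\phi \iff \M,s\vDash\phi$ by induction on the structure of $\phi$, using the two lemmas just established (Lemma~\ref{game part} and Lemma~\ref{epiPrt}, which guarantee that $\G$ and $\M$ are well-defined, and Lemma~\ref{Lem.Bis}, the bisimulation lemma) together with the bisimulation-invariance result (Proposition~\ref{Lem.TrPrsv}). The atomic and Boolean cases are routine: for $p_i$ both semantics read off the same valuation $V$ since $f$ is the identity on $W$, and $\neg,\land$ follow directly from the induction hypothesis. The $\K_i$ case is also immediate because $\M$ and $\N$ share the same carrier set $W$ and the same relations $\sim_i$, so the quantification over $\sim_i$-successors is literally the same on both sides, and we apply the induction hypothesis to the subformula.

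The only real work is the modal case $\phi = [\Ue]\psi$. Here the two semantics look quite different: on the ETL side, $\N,s\Vdash[\Ue]\psi$ means $\N,t\Vdash\psi$ for every $t$ with $s\rel{\Ue}t$, whereas on the DEL side, $\M,s\vDash[\Ue]\psi$ means that $a\in e(f(s))$ implies $\M\otimes\U,(s,a)\vDash\psi$. First I would handle the vacuous subcase: if there is no $t$ with $s\rel{\Ue}t$, then by the definition of the induced game $\G$ there is no $a$-transition from $s$, so $a\notin e(s)=e(f(s))$ (using $e(s)=Act(s)$ from Lemma~\ref{game part} and $f=\mathrm{id}$), and both sides hold vacuously. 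Otherwise, by $(\textbf{Det})$ there is a \emph{unique} such $t$, and $a\in e(f(s))$. Then I would invoke Lemma~\ref{Lem.Bis} to get $\M\otimes\U,(s,a)\bis_\G \M,t$, apply Proposition~\ref{Lem.TrPrsv} to conclude $\M\otimes\U,(s,a)\vDash\psi \iff \M,t\vDash\psi$, and then close the loop with the induction hypothesis applied to $\psi$ at $t$ (noting $t\in W$), which gives $\M,t\vDash\psi \iff \N,t\Vdash\psi$. Chaining these equivalences yields $\N,s\Vdash[\Ue]\psi \iff \M,s\vDash[\Ue]\psi$.

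I expect the main obstacle to be purely bookkeeping rather than conceptual: one must be careful that the induction hypothesis is stated uniformly over \emph{all} states of $W$ (and over the subformula $\psi$), so that it can be applied at $t$ and not merely at $s$; and one must be careful that $\M\otimes\U$ is a legitimate epistemic model w.r.t.\ $\G$ (which holds by Proposition~\ref{Prop.UpdEpM} since its domain contains $(s,a)$ and is thus nonempty) so that Proposition~\ref{Lem.TrPrsv} genuinely applies across $\M\otimes\U$ and $\M$. A subtle point worth stating explicitly is that Proposition~\ref{Lem.TrPrsv} is about $\G$-bisimilar pointed models over the \emph{same} game structure $\G$; since both $\M\otimes\U$ and $\M$ are epistemic models w.r.t.\ the \emph{same} $\G=\G_\N$, this is exactly the setting the proposition covers. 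Once these well-definedness checks are in place, the induction goes through cleanly.
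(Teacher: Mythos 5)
Your proposal is correct and follows essentially the same route as the paper's own proof: induction on $\phi$, with the $[\Ue]\psi$ case split into the vacuous subcase (via $e(s)=Act(s)$ and $f=\mathrm{id}$) and the non-vacuous one handled by $(\textbf{Det})$, Lemma~\ref{Lem.Bis}, Proposition~\ref{Lem.TrPrsv}, and the induction hypothesis at the unique successor $t$. The only cosmetic difference is that the paper dispatches the Boolean and $\K_i$ cases by citing Proposition~\ref{prop.coEL} rather than arguing them directly.
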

\begin{proof}
We prove by induction on the structure of the formulas.
The cases for Boolean combinations and $\K_i \phi$ are trivial due to Proposition \ref{prop.coEL} and the induction hypothesis. 

For the case $[\Ue]\phi$. 
On one hand, $\N,s\Vdash [\Ue] \phi$ iff for each $t\in W$, $s\rel{\Ue} t$ implies $\N,t\Vdash \phi$.
On the other hand $\M,s\vDash [\Ue] \phi$ iff $a\in e(f(w))$ implies $\M\otimes\U,(s,a)\vDash \phi$. Moreover, because $f$ is an identity function and by Lemma \ref{game part}, $e(s)=Act(s)$, $\M,s\vDash [\Ue] \phi$ is equivalent to that  $a\in Act(s)$ implies $\M\otimes\U,(s,a)\vDash \phi$.
It remains to show that
\begin{equation}
\textrm{ for each } t\in W, s\rel{\Ue} t \textrm{ implies } \N,t\Vdash \phi \textrm{ iff } a\in Act(s) \textrm{ implies } \M\otimes\U,(s,a)\vDash \phi.\label{Eqv.eq}
\end{equation}
Observe that $\Ue\notin Act(s)$ iff there is no $t\in W$ such that $s\rel{\Ue}t$.
Suppose there is no $t\in W$ such that $s\rel{\Ue}t$. Then $a\notin Act(s)$. It follows that both sides of the equivalence (\ref{Eqv.eq}) vacuously hold, and hence (\ref{Eqv.eq}) holds.
Suppose there is a $t\in W$ such that $s\rel{\Ue}t$. 
Then $a\in Act(s)$.
Since $\N$ is normal, by $(\textbf{Det})$ $t$ is the only state such that $s\rel{\Ue}t$.
By I.H., $\N,t\Vdash \phi$ iff $\M,t\vDash \phi$.
Using Lemma \ref{Lem.Bis}, we obtain that $\M\otimes\U,(s,a)\bis_{\G}\M,t$, and by Lemma \ref{Lem.TrPrsv} $\LDEL$ formulas are invariant w.r.t.\ $\vDash$, we have $\M,t\vDash \phi$ iff $\M\otimes\U,(s,a)\vDash \phi$. 
Therefore $\N,s\Vdash [\Ue]\phi$ iff $\M,s\vDash [\Ue]\phi$.
\end{proof}

\begin{theorem}\label{Th.UptoNem}
Let $\G$ be any game structure, $\M=(W,f,\sim,V)$ be any epistemic model, and $\F^{\otimes}(\M)=(W',\sim',\to',V')$ be the run model generated by repeatedly updating $\U$. Then for each $\LDEL$ formula $\phi$ and each $w\in W$, $$\M,w\vDash\phi \textrm{ iff } \F^{\otimes}(\M),(w)\Vdash\phi.$$
\end{theorem}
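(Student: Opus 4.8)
The plan is to prove this by induction on the structure of $\phi$, following the same template as Theorem~\ref{TH.Eqv}, but now going between an arbitrary epistemic model $\M$ and its generated run model $\F^\otimes(\M)$ rather than between a normal epistemic temporal model and its induced epistemic part. The Boolean cases are immediate, and the $\K_i$ case follows from Proposition~\ref{prop.coEL} together with the induction hypothesis, provided we first observe that $\F^\otimes(\M)$ is normal (Proposition~\ref{Prop.RunisNem}), so that the epistemic core of $\F^\otimes(\M)$ restricted to the worlds of the form $(w)$ is exactly the epistemic model $\M$ modulo the obvious identification $w\mapsto(w)$; more precisely, the $\sim_i$-edges among depth-$0$ worlds of $\F^\otimes(\M)$ are by Definition~\ref{Df.UpForest} precisely the $\sim_i$-edges of $\M\cic\U^0=\M$, and the valuations agree, so $\K_i$ reads off correctly.

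The heart of the argument is the modal case $[\Ue]\phi$. On the run-model side, $\F^\otimes(\M),(w)\Vdash[\Ue]\phi$ means that for every $t$ with $(w)\rel{\Ue}t$ we have $\F^\otimes(\M),t\Vdash\phi$; by the definition of $\rel{\Ue}$ in Definition~\ref{Df.UpForest} the only possible such $t$ is $(w,a)$, and it exists precisely when $a\in e(f_\M(w))$ (using Remark~\ref{Rem.Run}). On the epistemic-model side, $\M,w\vDash[\Ue]\phi$ means $a\in e(f_\M(w))$ implies $\M\cic\U,(w,a)\vDash\phi$. So after dispatching the vacuous case $a\notin e(f_\M(w))$ on both sides, it remains to show, under the assumption $a\in e(f_\M(w))$,
\[
\F^\otimes(\M),(w,a)\Vdash\phi \iff \M\cic\U,(w,a)\vDash\phi.
\]
The natural route is to identify $\F^\otimes(\M)$ restricted to the worlds whose first coordinate is $w$ and which have depth $\ge 1$ with (an isomorphic copy of) the run model $\F^\otimes(\M\cic\U)$ generated from the epistemic model $\M\cic\U$, via the shift $(w,a_1,\dots,a_k)\mapsto((w,a_1),a_2,\dots,a_k)$. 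Concretely, I would prove a \textbf{shift lemma}: for any epistemic model $\M$ and any $b\in\Act$ available somewhere, the submodel of $\F^\otimes(\M)$ hanging below the depth-$1$ layer is, componentwise on $\sim_i$, $\rel{\Ue}$ and $V$, isomorphic to $\F^\otimes(\M\cic\U)$ — this is essentially immediate from Definition~\ref{Df.UpForest} since $\M\cic\U^{k}$ computed from $\M$ restricted to first-coordinate-$w$ is the same as $(\M\cic\U)\cic\U^{k-1}$. Granting that, $\F^\otimes(\M),(w,a)\Vdash\phi$ is equivalent to $\F^\otimes(\M\cic\U),(w,a)\Vdash\phi$, and then the overall claim follows: either by a secondary induction on the modal depth of $\phi$ (so that the inductive hypothesis of \emph{this} theorem applies to $\M\cic\U$, which is again a legitimate epistemic model w.r.t.\ $\G$ by Proposition~\ref{Prop.UpdEpM}), giving $\F^\otimes(\M\cic\U),(w,a)\Vdash\phi \iff \M\cic\U,(w,a)\vDash\phi$; or, more cleanly, by applying Theorem~\ref{TH.Eqv} to the normal model $\F^\otimes(\M)$ and its induced epistemic part, plus a $\G$-bisimulation (Lemma~\ref{Lem.TrPrsv}) between that induced epistemic part at $(w,a)$ and $\M\cic\U$ at $(w,a)$.

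The step I expect to be the main obstacle is making the inductive bookkeeping rigorous: the modal case reduces the claim at $\M$ to the claim at $\M\cic\U$, so a naive induction on $\phi$ alone is not obviously well-founded if one keeps changing the model. The fix is to do the induction on formula structure with the model $\M$ universally quantified at each stage — since $\M\cic\U$ is, by Proposition~\ref{Prop.UpdEpM}, again an arbitrary epistemic model w.r.t.\ $\G$ (when nonempty), and the empty case is vacuous — so the inductive hypothesis for the immediate subformula $\phi$ of $[\Ue]\phi$ may be invoked at the model $\M\cic\U$. The only genuinely new piece of work beyond Theorem~\ref{TH.Eqv} is therefore the shift lemma, which is a routine unwinding of Definition~\ref{Df.UpForest} and Definition~\ref{df:execution} and carries no conceptual difficulty; everything else is reuse of Proposition~\ref{Prop.RunisNem}, Theorem~\ref{TH.Eqv}, and the $\G$-bisimulation invariance Proposition~\ref{Lem.TrPrsv}.
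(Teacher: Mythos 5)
Your proposal follows essentially the same route as the paper's proof: induction on formula structure with the epistemic model universally quantified, reducing the $[\Ue]\phi$ case to the claim at $\M\cic\U$ via the observation that $\F^{\otimes}(\M\cic\U)$ is (isomorphic to) the generated submodel of $\F^{\otimes}(\M)$ rooted at the depth-$1$ layer, which is exactly your shift lemma. One small caution: drop the phrase ``restricted to the worlds whose first coordinate is $w$'' --- the epistemic edges at depth $\geq 1$ cross different first coordinates, so the identification must be with the \emph{whole} depth-$\geq 1$ part (as your formal statement of the shift lemma and the paper's proof both correctly have it).
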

\begin{proof}
We proof by induction on the construction of $\phi$, and we only show the case $\phi=[\Ue]\psi$. On one hand, $\M,w\vDash\phi$ iff $\Ue\in e(f(w))$ implies $\M\otimes\U,(w,a)\vDash\psi$. On the other hand, $\F^{\otimes}(\M),(w)\Vdash\phi$ iff for each $v\in W'$, $(w)\rel{\Ue}v$ implies $\F^{\otimes}(\M),v\Vdash\psi$, iff, by determinacy, $(w)\rel{\Ue}v$ for some $v\in W'$ implies $\F^{\otimes}(\M),v\Vdash\psi$. Thus we have to demonstrate that \\

$\Ue\in e(f(w))\textrm{ implies }\M\otimes\U,(w,a)\vDash\psi\textrm{ iff } (w)\rel{\Ue}v\textrm{ for some }v\in W' \\ \textrm{ implies }\F^{\otimes}(\M),v\Vdash\psi.$ \hfill (2)

Note that for each $u\in W$ and $\Uf\in \Act$, $\Uf\in e(f(u))$ iff $(u)\rel{\Uf}u'$ for some $u'\in W'$. If $\Ue\notin e(f(w))$, then there is no $v\in W'$ such that $(w)\rel{\Ue}v$, which implies that both sides of the equivalence (2) are vacuously true, and hence (2) holds.

Suppose $\Ue\in e(f(w))$. Then $(w)\rel{\Ue}v$ for some $v\in W'$, and clearly $v=(w,a)$. It remains to show that $$\M\otimes\U,(w,a)\vDash\psi\textrm{ iff }\F^{\otimes}(\M),(w,a)\Vdash\psi.$$ We prove this by showing that $\F^{\otimes}(\M\otimes\U)$ is a generated submodel of $\F^{\otimes}(\M)$, and then using induction hypothesis.

We claim that $\F^{\otimes}(\M\otimes\U)$ is the submodel of $\F^{\otimes}(\M)$ generated from $B=\{(u,a_i,\dots,a_k)\in W'\mid k=1\}$. It is easy to check that it is indeed the case, and $(w,a)\in B$. As is well known in modal logic that each state in generated submodel satisfies the same formula in original model. It follows that $\F^{\otimes}(\M),(w,a)\Vdash\psi$ iff $\F^{\otimes}(\M\otimes\U),(w,a)\Vdash\psi$, and this completes our proof.
\end{proof}

Based on the above theorems, we can establish the equivalence of the two semantics modulo validities, which will play a crucial role in the later proof of the completeness of our proof system. Let $\mathbb{C}$ be the class of normal epistemic temporal models.

\begin{theorem}\label{Co.TrPrsv}
For each \LDEL\ formula $\phi$, $\vDash \phi$ iff $\mathbb{C}\Vdash\phi$.
\end{theorem}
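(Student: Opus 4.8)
The goal is to prove $\vDash\phi$ iff $\mathbb{C}\Vdash\phi$, where $\vDash$ is the DEL-style semantics over epistemic models (w.r.t. some game structure $\G$) and $\Vdash$ is the ETL-style semantics over normal epistemic temporal models. We have two bridging theorems: Theorem~\ref{TH.Eqv} says that a normal ETL model $\N$ and its induced epistemic model $\M_\N$ (over the induced game structure $\G_\N$) agree on all \LDEL\ formulas; Theorem~\ref{Th.UptoNem} says that an epistemic model $\M$ and its generated run model $\F^\otimes(\M)$ agree on all \LDEL\ formulas. Also Proposition~\ref{Prop.RunisNem} says $\F^\otimes(\M)$ is always normal. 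So this final theorem should follow by chaining these facts in the two directions.

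So my plan is the following. For the contrapositive of the left-to-right direction: suppose $\mathbb{C}\not\Vdash\phi$. Then there is a normal epistemic temporal model $\N$ and a state $s$ with $\N,s\not\Vdash\phi$. By Lemma~\ref{game part} and Lemma~\ref{epiPrt}, $\G_\N$ is a game structure and $\M_\N$ is an epistemic model w.r.t.\ $\G_\N$. By Theorem~\ref{TH.Eqv}, $\M_\N,s\not\vDash\phi$, so $\phi$ is refuted on an epistemic model, hence $\not\vDash\phi$. Conversely, for the right-to-left direction: suppose $\not\vDash\phi$. Then there is some game structure $\G$, some epistemic model $\M$ w.r.t.\ $\G$, and some world $w$ with $\M,w\not\vDash\phi$. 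Form the run model $\F^\otimes(\M)$, which by Proposition~\ref{Prop.RunisNem} is a normal epistemic temporal model, hence in $\mathbb{C}$. By Theorem~\ref{Th.UptoNem}, $\F^\otimes(\M),(w)\not\Vdash\phi$. Hence $\mathbb{C}\not\Vdash\phi$. Combining the two directions gives the equivalence.

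I should also make explicit the (trivial but necessary) observation about what ``$\vDash\phi$'' means: a formula is $\vDash$-valid iff it holds at every world of every epistemic model w.r.t.\ every game structure (over the fixed $\Ag,\Act,\BP$), and ``$\mathbb{C}\Vdash\phi$'' means $\phi$ holds at every world of every normal epistemic temporal model. With those readings fixed, the two implications above are exactly the two halves of the proof, and nothing more is needed.

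There is essentially no obstacle here — all the work has been front-loaded into Theorems~\ref{TH.Eqv} and~\ref{Th.UptoNem} and Proposition~\ref{Prop.RunisNem}. The only mild subtlety worth a sentence is checking that the induced structures in the left-to-right direction really are legitimate objects for the $\vDash$ semantics (a game structure together with an epistemic model over it), which is precisely what Lemmas~\ref{game part} and~\ref{epiPrt} secure; once that is in hand the argument is a two-line citation chase in each direction. I would present it as: ``\begin{proof} From left to right, we argue contrapositively\dots From right to left\dots\end{proof}''.

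\begin{proof}
We prove the two directions separately; both are contrapositives.

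From right to left, suppose $\not\vDash\phi$. Then there is a game structure $\G$, an epistemic model $\M=(W,f,\sim,V)$ w.r.t.\ $\G$, and a world $w\in W$ with $\M,w\nvDash\phi$. Let $\F^\otimes(\M)$ be the run model generated by repeatedly updating $\U$. By Proposition~\ref{Prop.RunisNem}, $\F^\otimes(\M)$ is a normal epistemic temporal model, hence $\F^\otimes(\M)\in\mathbb{C}$. By Theorem~\ref{Th.UptoNem}, $\M,w\vDash\phi$ iff $\F^\otimes(\M),(w)\Vdash\phi$, so $\F^\otimes(\M),(w)\nVdash\phi$. Therefore $\mathbb{C}\nVdash\phi$.

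From left to right, suppose $\mathbb{C}\nVdash\phi$. Then there is a normal epistemic temporal model $\N=(W,\sim,\to,V)$ and a state $s\in W$ with $\N,s\nVdash\phi$. Let $\G_\N=(W,R,O)$ be the game induced by $\N$, which is a game structure w.r.t.\ $\Act$ and $\Ag$ by Lemma~\ref{game part}, and let $\M_\N=\lr{W,f,\sim,V}$ be the epistemic part induced by $\N$, which is an epistemic model w.r.t.\ $\G_\N$ by Lemma~\ref{epiPrt}. By Theorem~\ref{TH.Eqv}, $\N,s\Vdash\phi$ iff $\M_\N,s\vDash\phi$, so $\M_\N,s\nvDash\phi$. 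Thus $\phi$ is refuted on an epistemic model w.r.t.\ a game structure, i.e.\ $\nvDash\phi$.

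Combining the two directions yields $\vDash\phi$ iff $\mathbb{C}\Vdash\phi$.
\end{proof}
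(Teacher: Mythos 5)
Your proposal is correct and follows essentially the same route as the paper's own proof: both directions are argued contrapositively, chaining Theorem~\ref{TH.Eqv} (with the induced game structure and epistemic part) for one direction and Theorem~\ref{Th.UptoNem} together with Proposition~\ref{Prop.RunisNem} for the other. Your explicit invocation of Lemmas~\ref{game part} and~\ref{epiPrt} to justify that the induced structures are legitimate inputs to the $\vDash$ semantics is a small but welcome addition that the paper leaves implicit.
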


\noindent\begin{proof}
Suppose $\mathbb{C}\nVdash\phi$ for some \LDEL\ formula $\phi$. Then there is an normal epistemic temporal model $\N\in \mathbb{C}$ such that $\N\nVdash\phi$. By Theorem \ref{TH.Eqv} there is an epistemic model $\M$ such that $\M\nvDash\phi$. Hence $\nvDash\phi$.

Suppose $\nvDash\phi$. Then there is an epistemic model $\M$ such that $\M\nvDash\phi$. By Theorem \ref{Th.UptoNem} $\F^{\otimes}(\M)\nvDash\phi$, and hence using Proposition \ref{Prop.RunisNem}, $\F^{\otimes}(\M)\in \mathbb{C}$. Therefore $\mathbb{C}\nVdash\phi$.
\end{proof}

\medskip

These two semantics and their correspondence provide us two perspectives to understand our logic: the constructive DEL-like perspective and the global ETL-like perspective. In the next section, we will see that the connection of these two perspectives also play a technical role: we will use the correspondence to give a complete axiomatization of our logic. 

\section{A complete axiomatization}
\label{sec.axiom}
In this section we give a complete axiomatization of the dynamic epistemic logic we proposed based on game structures and player assumptions. Due to theorem \ref{Co.TrPrsv}, we only need to show that the proof system is complete w.r.t.\ the $\Vdash$ semantics over normal epistemic temporal models. Namely the same proof system is complete for two different semantics. In fact, it is impossible to axiomatize our logic using the standard reduction method for \DEL: the action modality is clearly not reducible to the purely epistemic fragment. The techniques we use are   developed in \cite{WC12,WA13}. 

Note that although our logical approach in this paper is mostly semantics-driven, a proof system with intuitive axioms can help us to see clearly what we have assumed in our framework and to understand our own work better. 

\subsection{A proof system \SELA}
Recall that we fix an observation model $\U$. The following proof system also depends on the given $\U$. 
{\small \begin{center}
\begin{tabular}{lclc}
\multicolumn{4}{c}{System \SELA}\\
\multicolumn{2}{l}{\textbf{Axioms}}&\textbf{Rules}&\\
\TAUT & \tr{all the axioms of propositional logic}&\MP & $\dfrac{\phi,\phi\to\psi}{\psi}$\\
$\DISTK_i$ & $\K_i(\phi\to \psi)\to (\K_i\phi\to \K_i\psi)$&$\GENK_i$ &$\dfrac{\phi}{\K_i\phi}$\\
\DIST & $[a](\phi\to \psi)\to ([a]\phi\to [a]\psi)$ &$\GEN{a}$ &$\dfrac{\phi}{[a]\phi} $\\
 \AxTr& $\K_i\phi\to \phi $ &\phantom{$\dfrac{\phi}{\K\phi}$}\\
 \AxTrans& $\K_i\phi \to \K_i\K_i\phi$&\phantom{$\dfrac{\phi}{\K\phi}$}\\
 \AxEuc& $\neg \K_i\phi\to \K_i\neg \K_i\phi$&\phantom{$\dfrac{\phi}{\K\phi}$}\\
\NM &$\hat{\K_i}\langle b\rangle(\phi\wedge\phi^{o}_i)\rightarrow [a](\phi^{o}_i \rightarrow\hat{\K}_i\phi))$ \text{ (if $a\tot_i b$)}& \phantom{$\dfrac{\phi}{\K\phi}$}\\
\PR &$\lr{\Ue}\hat{\K}_i\phi\to \bigvee_{b:a\tot_i b}\hat{\K}_i\lr{\Uf}\phi$ && \phantom{$\dfrac{\phi}{\Box\phi}$}\\
 \DET &$\lr{a}\phi\to [a]\phi$&&$\phantom{\dfrac{\phi(p)}{\phi(\psi)}}$ \\
 \EXTURN &$ \TURN_i\to \bigwedge_{j\not=i}\neg \TURN_j$ &&\phantom{$\dfrac{\phi}{\K\phi}$}\\
 \INFO &$ \lr{a}\top\lra p^a_i$ \text{ (if $a\in A_i$)}& \phantom{$\dfrac{\phi}{\K\phi}$}\\
 \KE &$(p_i\to \K_ip_i) \land (\neg p_i\to \K_i\neg p_i)$&&\phantom{$\dfrac{\phi(p)}{\phi(\psi)}$} \\
\end{tabular}
\end{center}
}
\noindent where $a$ ranges over $\Act$, $p, q$ range over $\BP$ and  in the clause of \NM, $\phi^{o}_i$ ranges over the following set of formulas which exhausts every possible combination of information pieces for $i$ (note that $\BP$ is finite):
$$\{(\bigwedge_{p\in \B}p_i) \land (\bigwedge_{p\not\in \B}\neg p_i) \mid \B\subseteq \BP\}.$$
Therefore, $\NM$ is a finite schema. Since $\Act$ is finite, \PR\ is a finite schema. Note that if $\U$ consists of public actions only ($\tot_i $ are all self-loops), then we can replace $b$ with $a$ in \NM\ and \PR. Moreover, the system is clearly not closed under substitutions due to the special proposition letters. 
\medskip

Besides the `usual suspects' for modal logic like \DIST\ and \GEN{a}, axiom schemata $\NM$ and $\PR$ state two important properties about the players in our framework. Note that the general forms of the two axioms are $\hat{\K}\lr{\cdot}\phi\to [\cdot]\hat{\K}\phi$ and $\lr{\cdot}\hat{\K}\phi\to \hat{\K}\lr{\cdot}\phi$ respectively, which can be massaged into $\lr{\cdot}\K\phi\to \K[\cdot]\phi$ and $\K[\cdot]\phi\to [\cdot]\K\phi$. Then they roughly say ``if I know it now then I knew it would hold before the action'' (no miracles) and ``if I expect to know then I will indeed know it after the action'' (perfect recall). The formulas $\phi^{o}_i$ are used to control the condition under which no miracles can happen. Other axioms are quite straightforward; $\DET$ says that the actions are deterministic; \EXTURN\ says that each state can only belong to one player; \INFO\ gives the meaning to information pieces $p_i^a$; \KE\ says that the players always know the information given to them by the game.







\medskip

It is not hard to establish the soundness of the system. 
\begin{theorem}[Soundness]\label{Th.Sound}
The system $\SELA$ is sound w.r.t. $\vDash$.   
\end{theorem}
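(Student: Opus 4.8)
The plan is to verify soundness axiom-by-axiom and rule-by-rule with respect to the $\vDash$ semantics on epistemic models w.r.t.\ an arbitrary game structure $\G$. The propositional tautologies, \MP, the normal modal machinery ($\DISTK_i$, $\GENK_i$, \DIST, $\GEN{a}$) and the S5 axioms \AxTr, \AxTrans, \AxEuc\ are standard, since each $\sim_i$ is an equivalence relation and each $[a]$ is interpreted by the (partial, hence at most one successor) update transition; I would dispatch these quickly. The content lies in the six special axioms \NM, \PR, \DET, \EXTURN, \INFO, \KE.

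The key observation that makes everything go through is Remark~\ref{Rem.Run}, or rather its one-step analogue: for an epistemic model $\M=(W,f,\sim,V)$ and the product $\M\otimes\U$, the successor $(w,a)$ exists iff $a\in e(f(w))$, its state assignment is $R(f(w),a)$, its valuation is $O_\G(R(f(w),a),i)$, and $(w,a)\sim_i(u,b)$ iff $w\sim_i u$, $a\tot_i b$ and $O_\G(R(f(w),a),i)=O_\G(R(f(u),b),i)$. I would state this once and then reuse it. With it: \DET\ is immediate since $R$ is a partial function, so $\lr a\phi$ already forces the unique successor to satisfy $\phi$; \EXTURN\ follows from the clause on $R$ in Definition~\ref{Df.Game} that all available actions at a state belong to a single player, lifted to $\M$ via $e_i(f(w))$; \INFO\ is the biconditional built directly into $O$ (for $a\in A_i$, $p^a\in O_\G(s,i)$ iff $R(s,a)$ defined) together with $V(w,i)=O_\G(f(w),i)$; and \KE\ is exactly the constraint $w\sim_i v\Rightarrow O_\G(f(w),i)=O_\G(f(v),i)$ in Definition~\ref{Df.epi}, since $p_i$ holds at $w$ iff $p\in V(w,i)=O_\G(f(w),i)$.

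For \PR, suppose $\M,w\vDash\lr a\hat\K_i\phi$. Then $a\in e(f(w))$, so $(w,a)$ exists in $\M\otimes\U$, and $\M\otimes\U,(w,a)\vDash\hat\K_i\phi$, i.e.\ there is $(u,b)\sim_i(w,a)$ with $\M\otimes\U,(u,b)\vDash\phi$. By the one-step observation $w\sim_i u$ and $a\tot_i b$; moreover $b\in e(f(u))$ so $\M,u\vDash\lr b\phi$, giving $\M,w\vDash\hat\K_i\lr b\phi$ with $a\tot_i b$, hence the disjunction $\bigvee_{b:a\tot_i b}\hat\K_i\lr b\phi$. For \NM\ (with $a\tot_i b$), assume $\M,w\vDash\hat\K_i\lr b(\phi\wedge\phi^o_i)$; so there is $u\sim_i w$ with $b\in e(f(u))$ and $\M\otimes\U,(u,b)\vDash\phi\wedge\phi^o_i$. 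The conjunct $\phi^o_i$ pins down the full information set $O_\G(R(f(u),b),i)=\B$ for the specific $\B$. Now take any $a$-successor: if $a\notin e(f(w))$ the consequent $[a](\dots)$ is vacuous; otherwise consider $(w,a)$ and suppose $\M\otimes\U,(w,a)\vDash\phi^o_i$, so $O_\G(R(f(w),a),i)=\B=O_\G(R(f(u),b),i)$. Together with $w\sim_i u$ and $a\tot_i b$, the one-step observation yields $(w,a)\sim_i(u,b)$, whence $\M\otimes\U,(w,a)\vDash\hat\K_i\phi$, establishing $\M,w\vDash[a](\phi^o_i\to\hat\K_i\phi)$.

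I expect no deep obstacle: soundness is a bookkeeping exercise once the product-model facts are isolated. The one point demanding a little care is the bookkeeping around the control formulas $\phi^o_i$ in \NM\ --- making sure the chosen $\B\subseteq\BP$ really is forced to coincide on both the antecedent witness $(u,b)$ and the arbitrary $a$-successor $(w,a)$ before invoking no-miracles --- and, relatedly, handling the vacuous cases ($a\notin e(f(w))$, empty domains) uniformly so the modal clauses for $[a]$ are not misread. Everything else reduces to unwinding the definitions of $\M\otimes\U$, $O_\G$, and $\sim$.
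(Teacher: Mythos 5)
Your proposal is correct and follows essentially the same route as the paper's own proof: the standard axioms are dispatched as routine, and each of \DET, \EXTURN, \INFO, \KE, \NM, \PR\ is verified by unwinding the definition of $\M\otimes\U$ exactly as you describe, including the same handling of the vacuous case $a\notin e(f(w))$ and the same use of $\phi^o_i$ to force agreement of the valuations before concluding $(w,a)\sim'_i(u,b)$ in the \NM\ case.
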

\begin{proof}
We only show the cases for axioms $\DET, \EXTURN, \INFO, \KE, \NM$ and $\PR$. Others are standard. Let $\M=(W, f, \sim, V)$ be any epistemic model w.r.t.\ a game structure $\G$, and let $w\in W$. 

For $\DET$, if $\M,w\vDash\lr{\Ue}\phi$, then $\Ue\in e(f(w))$ and $\M\otimes\U,(w,\Ue)\vDash\phi$, which implies $\M,w\vDash[\Ue]\phi$.

For $\EXTURN$. Recall that $\TURN_i=\bigvee_{\Ue\in A_i}\lr{\Ue}\top$ for each $\in\Ag$. Suppose $\M,w\vDash\TURN_i$. Then $\M,w\vDash\lr{\Ue}\top$ for some $\Ue\in A_i$, which implies $\Ue\in e(f(w))$. Using our requirement for $R_\G$ in Definition $\ref{Df.Game}$, we have $\Uf\notin e(f(w))$ for each $\Uf\notin A_i$. It follows that $\M,w\vDash\neg[\Uf]\top$ for each $\Uf\notin A_i$. Thus $\M,w\vDash\bigwedge_{j\neq i}\TURN_j$.

For $\INFO$. For all $i\in \Ag$ and $\Ue\in A_i$, $\M,w\vDash\lr{\Ue}\top$ iff $\Ue\in e(f(w))$ iff $R(f(w),\Ue)$ is defined, iff, by our definition about $O_\G$ in Definition $\ref{Df.Game}$, $p^a\in O_\G(f(w),i)=V(w,i)$, iff $\M,w\vDash p^a_i$.

For $\KE$. 
Let $p\in \BP$ and $i\in\Ag$. Suppose $\M,w\vDash p_i$. Then $p\in V(w,i)=O_\G(f(w),i)$. Consider any $u\in W$ with $w\sim_i u$. Using Definition \ref{Df.epi}, we have $V(u,i)=O_\G(f(u),i)=O_\G(f(w),i)$, which implies $p\in V(u,i)$, i.e.,\ $\M,u\vDash p_i$. It follows that $\M,w\vDash \K_i p_i$. Similarly we can prove 
that $\M,w\vDash \neg p_i$ only if $\M,w\vDash \K_i \neg p_i$.

For $\NM$. Suppose $\M,w\vDash\hat{\K}_i\lr{\Uf}(\phi\wedge\phi^o_i)$ for some $i\in\Ag$, some formula $\phi$, $\phi^o_i$ and some $\Uf\in\Act$. Then there are $u\in W$ and $\Uf\in \Act$ such that $w\sim_i u$, $\Uf\in e(f(u))$ and $\M\otimes\U,(u,\Uf)\vDash\phi\wedge\phi^o_i$. Recall that $\phi^o_i=(\bigwedge_{p\in \B}p_i) \land (\bigwedge_{p\not\in \B}\neg p_i)$ for some $\B\subseteq\BP$. Now consider any $\Ue\in\Act$ with $\Ue\tot_i \Uf$. If $\Ue\notin e(f(w))$, we vacuously have $\M,w\vDash [\Ue](\phi^o_i\rightarrow \hat{\K_i}\phi)$. If $\Ue\in e(f(w))$ and $\M\otimes\N,(w,\Ue)\vDash\phi^o_i$, then using $\M\otimes\U,(u,\Uf)\vDash\phi^o_i$, $$O_\G(R(f(w),\Ue),i)=V((w,\Ue),i)=V((u,\Uf),i)=O_\G(R(f(u),\Uf),i),$$ which implies, together with $\Ue\in e(f(w))$ and $w\sim_i u$, that $(w,\Uf)\sim'_i(u,\Ue)$, where $\sim'$ is the epsitemic relation in $\M\otimes\U$. Since $\M\otimes\U,(u,\Uf)\vDash\phi$, it follows that $\M\otimes\U,(w,\Ue)\vDash\hat{\K_i}\phi$. Therefore $\M\otimes\U,(w,\Ue)\vDash\phi^o_i\rightarrow\hat{\K_i}\phi$, and then $\M,w\vDash [\Ue](\phi^o_i\rightarrow\hat{\K_i}\phi)$.

For $\PR$. Suppose $\M,w\vDash\lr{a}\hat{\K_i}\phi$ for some $\Ue\in \Act$, $i\in \Ag$ and some formula $\phi$. Then $\M\otimes\U,(w,\Ue)\vDash\hat{\K_i}\phi$, and hence there are $u\in W$ and $\Uf\in \Act$ such that $\Uf\in e(f(u))$, $(w,\Ue)\sim_i'(u,\Uf)$ and $\M\otimes\U,(u,\Uf)\vDash\phi$. It follows that $\M,u\vDash\lr{\Uf}\phi$. By definition \ref{df:execution}, we have $w\sim_i u$ and $\Ue\tot_i \Uf$. Thus $\M,w\vDash\hat{\K_i}\lr{\Uf}\phi$, which implies that $\M,w\vDash\bigvee_{\Uf:\Uf\tot_i\Ue}\hat{\K_i}\lr{\Uf}\phi$.
\end{proof}
\subsection{Completeness}

For completeness, we use the strategy proposed in \cite{WC12,WA13}. The idea is to turn the dynamics (updates) into transitions in a two dimensional Kripke model and use some special properties to characterize the updates such that the dynamics can be flattened out under these properties. The general strategy is as follows: 


$${\vDash\phi}\;{\Longrightarrow}\;{\mathbb{C}\Vdash\phi}\;{\Longrightarrow}\;{\vdash_{\SELA}\phi}\textrm{ for each }\LDEL\textrm{ formula }\phi.$$ 
The hardest part (the first step) has been solved in Theorem \ref{Co.TrPrsv}. In the following we prove the second step, namely the completeness w.r.t.\ the normal epistemic temporal models under $\Vdash$.\footnote{It is not that hard to see that the completeness should work in principle, based on the Sahlqvist completeness theorem: the axioms are mostly Sahlqvist formulas with a few exceptions which are not closed under substitution (they specify properties of the \textit{models} rather than frames). We will spell out the details in a proof based on the canonical model. }

\medskip

We use $\Gamma,\Lambda\ldots$ for sets of \LDEL\ formulas. As usual, we say a set of \LDEL\ formulas $\Gamma$ is \textit{maximally consistent} if it is $\SELA$-consistent, and any set properly containing $\Gamma$ is $\SELA$-inconsistent. 

\begin{definition}[Cononical model]
A canonical model $\N^C$ is a structure $$(W^C,\{\simeq_i\mid i\in \Ag\},\{\stackrel{\Ue}\rightarrowtail\mid \Ue\in \Act\},V^C)$$ where
 \begin{itemize}
  \item $W^C$ is the set of all maximally consistent sets;
  \item for all $w,u\in W^C$ and $i\in \Ag$, $w\simeq_i u$ iff $\{\phi\mid\K_i\phi\in w\}\subseteq \{\phi\mid \phi\in u\}$;
  \item for all $w,u\in W^C$ and $a\in \Act$,$w \stackrel{\Ue}\rightarrowtail u$ iff $\{\phi\mid [\Ue]\phi\in w\}\subseteq \{\phi\mid \phi\in u\}$; 
  \item for all $w\in W^C$, $i\in \Ag$ and $p\in\BP$, $w\in V^C(p,i)$ iff $p_i\in w$.
 \end{itemize}
\end{definition}
The following are routine exercises for normal modal logic (cf. e.g., \cite{mlbook}). 
\begin{lemma}\label{lem.truth}
Let  $\N^C=(W^C,\simeq,\rightarrowtail,V^C)$ be a canonical model. Then for all $w,u\in W^C$, $i\in \Ag$ and $\Ue\in \Act$:
\begin{itemize}
 \item $w\simeq_i u$ iff $\{\phi\mid\phi\in  u\}\subseteq\{\phi\mid\hat\K_i\phi\in w\}$;
 \item $\simeq_i$ is an equivalence relation for each $i\in\Ag$;
 \item $w\stackrel{\Ue}\rightarrowtail u$ iff $\{\phi\mid\phi\in u\}\subseteq\{\phi\mid\lr{\Ue}\phi\in w\}$;
 \item (truth lemma) for each \LDEL\ formula $\phi$, $\N^C,w\Vdash \phi$ iff $\phi\in w$.
\end{itemize}
\end{lemma}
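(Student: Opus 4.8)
The plan is to carry out the usual canonical-model argument for a multimodal normal logic, the modalities being the $\K_i$'s (normal with \DISTK, \GENK, plus \AxTr, \AxTrans, \AxEuc) and the $[\Ue]$'s (normal with \DIST\ and \GEN{a}); the substitution-sensitive axioms \NM, \PR, \DET, \EXTURN, \INFO, \KE\ play no role in this lemma and are only used afterwards to turn $\N^C$ into (or to recognise it as) a normal epistemic temporal model. First I would dispose of the two ``dual'' descriptions. For the first bullet: if $w\simeq_i u$ and $\phi\in u$ but $\hat{\K}_i\phi\notin w$, then $\K_i\neg\phi\in w$ by maximal consistency, hence $\neg\phi\in u$ by the defining clause of $\simeq_i$, contradicting the consistency of $u$; conversely, if $\{\phi\mid\phi\in u\}\subseteq\{\phi\mid\hat{\K}_i\phi\in w\}$ and $\K_i\psi\in w$, then $\neg\psi\notin u$ (otherwise $\hat{\K}_i\neg\psi\in w$, i.e.\ $\neg\K_i\psi\in w$, absurd), so $\psi\in u$, and thus $w\simeq_i u$. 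The third bullet is the identical argument with $[\Ue]$, $\lr{\Ue}$ and $\stackrel{\Ue}\rightarrowtail$ replacing $\K_i$, $\hat{\K}_i$ and $\simeq_i$.

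For the second bullet, reflexivity of $\simeq_i$ follows from \AxTr\ (so $\{\phi\mid\K_i\phi\in w\}\subseteq w$), transitivity from \AxTrans, and Euclideanness from \AxEuc, all in the textbook manner; since a reflexive Euclidean relation is an equivalence relation, $\simeq_i$ is one. For the truth lemma I would induct on $\phi$: the case $\top$ is trivial, $p_i$ holds by the definition of $V^C$, and the Boolean cases follow from maximal consistency. For $\phi=\K_i\psi$, if $\K_i\psi\in w$ then $\psi\in u$ for every $u$ with $w\simeq_i u$, so the induction hypothesis gives $\N^C,w\Vdash\K_i\psi$; if $\K_i\psi\notin w$, I would prove the existence lemma that $\{\chi\mid\K_i\chi\in w\}\cup\{\neg\psi\}$ is \SELA-consistent (otherwise $\vdash\chi_1\wedge\dots\wedge\chi_n\to\psi$ for finitely many $\chi_j$ with each $\K_i\chi_j\in w$, and then \GENK, \DISTK\ and \MP\ give $\vdash\bigwedge_j\K_i\chi_j\to\K_i\psi$, forcing $\K_i\psi\in w$), so by Lindenbaum it extends to some $u\in W^C$ with $w\simeq_i u$ and $\neg\psi\in u$, whence $\N^C,w\nVdash\K_i\psi$ by the induction hypothesis. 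The case $\phi=[\Ue]\psi$ is verbatim the same, now with \DIST, \GEN{a}\ and the third bullet.

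I do not expect a real obstacle: this is the standard lemma. The only point worth stating is that although \SELA\ is not closed under uniform substitution (because of the distinguished atoms $p^{\Ue}_i$), the truth lemma uses nothing beyond the $K$-axioms and necessitation rules for $\K_i$ and $[\Ue]$ together with Lindenbaum's lemma, which are all insensitive to the special atoms; the substitution-sensitive axioms are reserved for the subsequent and genuinely load-bearing step of establishing normality of the canonical structure.
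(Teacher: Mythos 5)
Your proof is correct and is exactly the standard canonical-model argument that the paper itself invokes without proof (it simply declares the lemma ``routine exercises for normal modal logic'' and cites a textbook). Your added remark that the failure of closure under uniform substitution is harmless here, since only the normal-modal-logic machinery is used, is accurate and consistent with the paper's treatment.
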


The only missing part is to show that the canonical model is indeed a normal epistemic temporal model w.r.t.\ $\U$.

\begin{lemma}\label{Prop.Can}
Given $\U$, the canonical model $\N^C=(W^C,\simeq,\rightarrowtail,V^C)$ is a normal epistemic temporal model.
\end{lemma}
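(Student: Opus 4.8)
The plan is to verify, one by one, that the canonical model $\N^C=(W^C,\simeq,\rightarrowtail,V^C)$ satisfies each of the six clauses in the definition of a normal epistemic temporal model w.r.t.\ $\U$, namely $\NMp$, $\PRp$, $(\textbf{Det})$, $(\textbf{Exturn})$, $(\textbf{Info})$ and $(\textbf{Ke})$. By Lemma \ref{lem.truth} we already know $\simeq_i$ is an equivalence relation for each $i$, so the underlying structure is an epistemic temporal model; what remains is the normality conditions. Each condition will be derived from the corresponding axiom by a standard canonical-model argument: translate the frame/model property we want into a statement about membership of formulas in maximally consistent sets, and use the relevant axiom together with the truth lemma and the characterizations of $\simeq_i$ and $\stackrel{\Ue}\rightarrowtail$ from Lemma \ref{lem.truth}.

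Concretely: For $(\textbf{Det})$ I would use axiom $\DET$ ($\lr a\phi\to[a]\phi$): if $w\stackrel{\Ue}\rightarrowtail t$ and $w\stackrel{\Ue}\rightarrowtail t'$, then for any $\phi\in t$ we have $\lr{\Ue}\phi\in w$ by Lemma \ref{lem.truth}, hence $[\Ue]\phi\in w$ by $\DET$, hence $\phi\in t'$; by symmetry $t=t'$. For $(\textbf{Exturn})$ I would use $\EXTURN$ together with the abbreviation $\TURN_i=\bigvee_{a\in A_i}\lr a\top$: if $Act(w)\neq\emptyset$ then some $\lr a\top$ with $a\in A_i$ is true at $w$, so $\TURN_i\in w$, so $\neg\TURN_j\in w$ for all $j\neq i$, which forces $Act(w)\subseteq A_i$ (using that the $A_i$ are pairwise disjoint and $a\in A_j$ implies $\lr a\top\to\TURN_j$). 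For $(\textbf{Info})$ I would use $\INFO$ ($\lr a\top\lra p^a_i$ for $a\in A_i$): $p^a\in V^C(w,i)$ iff $p^a_i\in w$ iff $\lr a\top\in w$ iff there is some $t$ with $w\stackrel{\Ue}\rightarrowtail t$. For $(\textbf{Ke})$ I would use $\KE$ together with axiom $\AxTr$: from $(p_i\to\K_ip_i)\land(\neg p_i\to\K_i\neg p_i)$ and $w\simeq_i v$ one gets $p_i\in w$ iff $p_i\in v$ for every $p$, i.e.\ $V^C(w,i)=V^C(v,i)$.

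The two substantive cases are $\PRp$ and $\NMp$. For $\PRp$: assume $s\stackrel{\Ue}\rightarrowtail t$ and $t\simeq_i t'$; I want $s'$ with $s\simeq_i s'$ and $s'\stackrel{\Uf}\rightarrowtail t'$ for some $b$ with $a\tot_i b$. Set $\Sigma=\{\phi\mid\phi\in t'\}$; using $t\simeq_i t'$ and $s\stackrel{\Ue}\rightarrowtail t$ one shows $\lr{\Ue}\hat\K_i\phi\in s$ for appropriate $\phi$, and axiom $\PR$ ($\lr{\Ue}\hat\K_i\phi\to\bigvee_{b:a\tot_i b}\hat\K_i\lr{\Uf}\phi$) pushes the diamond inward; a standard maximality/Lindenbaum-style argument (the $\bigvee$ is finite since $\Act$ is finite, so some particular $b$ with $a\tot_i b$ works, uniformly, by a compactness/pigeonhole argument on the consistency of $\{\hat\K_i\lr{\Uf}\phi\mid\phi\in t'\}$ for a fixed $b$) produces the required $s'$ and $b$. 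For $\NMp$: assume $s\simeq_i s'$, $s'\stackrel{\Ue}\rightarrowtail t'$, $s\stackrel{\Uf}\rightarrowtail t$, $b\tot_i a$ and $V^C(t,i)=V^C(t',i)$; I want $t\simeq_i t'$. Since $t$ and $t'$ are maximally consistent and agree on all $p_j$, the formula $\phi^o_i$ that describes $i$'s information is in both; for any $\phi\in t'$ we have $\lr{\Ue}(\phi\land\phi^o_i)$-type information available at $s'$, so $\hat\K_i\lr{\Ue}(\phi\land\phi^o_i)\in s$ via $s\simeq_i s'$, and axiom $\NM$ (in the rearranged form $\hat\K_i\lr b(\psi\land\phi^o_i)\to[a](\phi^o_i\to\hat\K_i\psi)$, with the roles of $a,b$ as in the statement) combined with $s\stackrel{\Uf}\rightarrowtail t$ and $\phi^o_i\in t$ yields $\hat\K_i\phi\in t$, i.e.\ $\phi\in\{\psi\mid\hat\K_i\psi\in t\}$; this gives $\{\phi\mid\phi\in t'\}\subseteq\{\phi\mid\hat\K_i\phi\in t\}$, which by Lemma \ref{lem.truth} is exactly $t\simeq_i t'$.

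The main obstacle I anticipate is the $\PRp$ case: extracting a \emph{single} witnessing action $b$ with $a\tot_i b$ and a \emph{single} world $s'$ that simultaneously handles \emph{all} formulas $\phi\in t'$, rather than just each formula separately. This requires the usual care — arguing that the set $\{\hat\K_i\lr{\Uf}\phi\mid\phi\in t'\}$ is consistent for some fixed $b$ (using finiteness of $\Act$ so that the disjunction in $\PR$ has finitely many disjuncts, plus a pigeonhole/maximality argument), then building $s'$ by Lindenbaum from a suitable set and checking $s\simeq_i s'$ and $s'\stackrel{\Uf}\rightarrowtail t'$. The bookkeeping around the special non-substitutable proposition letters $p^a_i$ and the exact placement of $a$ versus $b$ in the $\NM$ schema also needs attention, but these are routine once the pattern is set. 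Everything else ($\textbf{Det}$, $\textbf{Exturn}$, $\textbf{Info}$, $\textbf{Ke}$) is a short direct verification.
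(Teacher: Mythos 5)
Your proposal is correct and follows essentially the same route as the paper's proof: each normality condition is verified directly from the corresponding axiom via the truth lemma and the characterizations of $\simeq_i$ and $\stackrel{\Ue}\rightarrowtail$, with the $\NMp$ case handled by the characteristic formula $\phi^o_i$ of $V^C(t',i)$ and the $\PRp$ case by a pigeonhole argument over the finitely many $b$ with $a\tot_i b$ to get a single consistent set $\{\phi\mid\K_i\phi\in s\}\cup\{\lr{\Uf}\phi\mid\phi\in t'\}$ that Lindenbaum extends to the witness $s'$. The difficulty you flag in $\PRp$ (one uniform $b$ and $s'$ for all of $t'$) is exactly the point the paper's proof spends its effort on, and your sketch resolves it the same way.
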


\noindent\begin{proof}
We only show that $\N^C$ satisfies the condition $(\textbf{Det})$, $(\textbf{Exturn})$, $(\textbf{Info})$, $(\textbf{Ke})$, $(\textbf{Nm})$ and $(\textbf{Pr})$. Let $w,u,v\in W^C$, $i\in \Ag$ and $\Ue\in\Act$.

For $(\textbf{Det})$, suppose $w \stackrel{\Ue}\rightarrowtail u$, $u \stackrel{\Ue}\rightarrowtail v$, and $u\neq v$. Then there is a $\phi\in u$ such that $\neg\phi\in v$, which implies $\lr{a}\phi\wedge\lr{a}\neg\phi\in w$. However, this is impossible, because by axiom $\DET$, $\lr{a}\phi\rightarrow [a]\phi\in w$, and then $[a]\phi\in w$, a contradiction. 

For $(\textbf{Exturn})$, suppose $Act(w)\neq\emptyset$. Then there is a $j\in \Ag$ and a $\Uf\in A_j$ such that $\Uf\in Act(w)$, which implies $\lr{\Uf}\top\in w$. Recall that $\TURN_i=\bigvee_{c\in A_i}\lr{c}\top$, and hence $\vdash\lr{\Uf}\top\rightarrow\TURN_j$. It follows, by axiom $\EXTURN$, that $\bigwedge_{k\neq j}\neg\TURN_k\in w$, i.e.,\ $\neg \lr{c}\top\in w$ for each $c\in \Act$ such that $c\notin Act_j$. Thus $Act(w)\subseteq A_j$.

For $(\textbf{Info})$, consider any $\Uf\in A_i$. $p^b\in V^C(w,i)$ iff $p^b_i\in w$ iff, by axiom $\INFO$, $\lr{\Uf}\top\in w$ iff $w\stackrel{\Uf}\rightarrowtail u'$ for some $u'\in W^C$. 

For $(\textbf{Ke})$, suppose $w\simeq_i u$ and $V^C(w,i)\neq V^C(u,i)$. Then $V^C(w,i)$ and $V^C(u,i)$ do not contain all the same information pieces. Without losing any generality suppose there is a $p\in\BP$ such that $p\in V^C(w,i)$ but $p\notin V^C(u,i)$. Then $p_i\in w$ and $p_i\notin u$. Using axiom $\KE$ we have $p_i\rightarrow\K p_i\in w$, which implies $\K p_i\in w$. Since $w\simeq_i u$, we can obtain that $p_i\in u$, a contradiction!

For $(\textbf{Nm})$, suppose $w\simeq_i u$ and $u\stackrel{\Ue}\rightarrowtail v$. Let $\B\subseteq\BP$ be the set of information pieces such that $p\in \B$ iff $p_i\in v$, and let $$\psi=(\bigwedge_{p\in \B}p_i)\wedge (\bigwedge_{p\notin \B}\neg p_i).$$ Then $\psi\in v$.
We can check that for each $s\in W^C$, $\psi\in s$ iff $V^C(s,i)=V^C(v,i)$. Now consider any $s\in W^C$ and $\Uf\in \Act$ such that $w\stackrel{\Uf}\rightarrowtail s$,$\Ue\tot_i\Uf$ and $V^C(s,i)=V^C(v,i)$, and then $\psi\in s$. We claim that 
\begin{equation}
\{\phi\mid\phi\in v\}\subseteq\{\phi\mid\hat{\K}_i\phi\in s\}\label{Can.Nm}
\end{equation}
and hence $s\simeq_i v$. Suppose $\phi\in v$. Then $\lr{a}(\phi\wedge\psi)\in u$, which implies $\hat{\K}_i\lr{a}(\phi\wedge\psi)\in w$. By axiom $\NM$ and $\Ue\tot_i\Uf$, we have $[\Uf](\psi\rightarrow\hat{\K}_i\phi)\in w$, and because $w\stackrel{\Uf}\rightarrowtail s$, we can get that $(\psi\rightarrow\hat{\K}_i\phi)\in s$. Since $\psi\in s$, we obtain  $\hat{\K}_i\phi\in s$. Therefore $(\ref{Can.Nm})$ holds, and this completes our proof for $(\textbf{Nm})$.

For $(\textbf{Pr})$, suppose $w\stackrel{\Ue}\rightarrowtail u$ and $u\simeq_i v$. Let $\Gamma=\{\phi\mid\K_i\phi\in w\}$, and for each $\Uf\in\Act$, let $\Lambda_b=\{\lr{\Uf}\phi\mid\phi\in v\}$. We show that 
\begin{equation}
\textrm{there is a } \Uf\in\Act \textrm{ such that } \Uf\tot_i\Ue \textrm{ and } \Gamma \cup\Lambda_b \textrm{ is $\SELA$-consistent}.\label{Con.Pr}
\end{equation}
Note that by axiom $\DET$, for each finite sequence of formulas $\phi_1,\phi_2,\ldots,\phi_n$ and each $\Uf\in\Act$, 
$$\vdash\lr{\Uf}\phi_1\wedge\lr{\Uf}\phi_2\wedge\cdots\wedge\lr{\Uf}\phi_n\leftrightarrow\lr{\Uf}(\phi_1\wedge\phi_2\wedge\cdots\wedge\phi_n).$$
Hence for each $\Uf\in\Act$, if $\Gamma\cup\Lambda_b$ is $\SELA$-inconsistent, then there are formulas $\psi$ and $\phi$ such that $\psi\in\Gamma$, $\lr{\Uf}\phi\in\Lambda_b$ and $\vdash\lr{\Uf}\phi\rightarrow\neg\psi$. For each $\Uf\in \Act$, if $\Gamma\cup\Lambda_b$ is $\SELA$-inconsistent, then let $\psi_b$ and $\phi_b$ be such $\psi$ and $\phi$. Suppose towards contradiction that for each $\Uf\in\Act$, $\Uf\tot_i\Ue$ only if $\Gamma\cup\Lambda_b$ is $\SELA$-inconsistent. Because $\lr{\Uf}\phi_b\in\Lambda_b$ for each $\Uf$ with $\Uf\tot_i \Ue$, and by the definition of $\Lambda_b$, we have $\phi_b\in v$ for each $\Uf\tot_i \Ue$. Then 
$$\textrm{for each } c \textrm{ with } c\tot_i\Ue, \lr{c}\lambda\in\Lambda_c\textrm{ and }\vdash\lr{c}\lambda\rightarrow\lr{c}\phi_c,\textrm{ where }\lambda=\bigwedge_{b:b\tot_i\Ue}\phi_b.$$
Since $\vdash\lr{\Uf}\phi_b\rightarrow\neg\psi_b$ for each $b$ with $b\tot_i a$, it follows that
\begin{equation}
\vdash\bigwedge_{b:b\tot_i a}(\lr{\Uf}\lambda\rightarrow \neg\chi)\textrm{ where }\chi=\bigwedge_{b:b\tot_i a}\psi_b.\label{The.Pr}
\end{equation}
Note that by the our assumption about $\Gamma$, $\chi\in\Gamma$. Since $\lambda\in v$, $w\stackrel{\Ue}\rightarrowtail u$ and $u\simeq_i v$, we have $\lr{\Ue}\hat\K_i\lambda\in w$, which implies, by axiom $\PR$, $\bigvee_{b:b\tot_i a}\hat\K_i\lr{\Uf}\lambda\in w$. By $(\ref{The.Pr})$ we have $\bigwedge_{b:b\tot_i a}\K_i(\lr{\Uf}\lambda\rightarrow \neg\chi)\in w$, and then $\hat\K_i\neg\chi\in w$, but this is impossible, for $\chi\in\Gamma$, i.e.,\ $\K_i\chi\in w$. Therefore (\ref{Con.Pr}) holds for some $b$, and hence there is a maximally consistent set $\Lambda$ such that $\Gamma\cup\Lambda_b\subseteq\Lambda$, which implies $w\simeq_i \Lambda$ and $\Lambda\stackrel{\Uf}\rightarrowtail v$.
\end{proof}

\begin{theorem}\label{Th.Com}
$\SELA$ is sound and complete w.r.t. $\lr{\LDEL,\mathbb{C},\Vdash}$.
\end{theorem}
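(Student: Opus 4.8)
The plan is to assemble the theorem from the ingredients already laid out. Soundness is immediate: Theorem \ref{Th.Sound} established that \SELA\ is sound with respect to the $\vDash$ semantics over epistemic models, and Theorem \ref{Co.TrPrsv} tells us that for every \LDEL\ formula $\phi$, $\vDash\phi$ iff $\mathbb{C}\Vdash\phi$. Chaining these two facts gives soundness with respect to $\lr{\LDEL,\mathbb{C},\Vdash}$ directly. So the real content is completeness: every formula valid on all normal epistemic temporal models (under $\Vdash$) is \SELA-derivable.

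For completeness I would argue by contraposition in the standard canonical-model style. Suppose $\nvdash_{\SELA}\phi$. Then $\{\neg\phi\}$ is \SELA-consistent, so by Lindenbaum it extends to a maximally consistent set $w\in W^C$. By Lemma \ref{Prop.Can}, the canonical model $\N^C$ is a normal epistemic temporal model, hence $\N^C\in\mathbb{C}$. By the truth lemma (Lemma \ref{lem.truth}), $\N^C,w\Vdash\neg\phi$, so $\mathbb{C}\nVdash\phi$. This establishes $\mathbb{C}\Vdash\phi\Rightarrow\vdash_{\SELA}\phi$, which together with Theorem \ref{Co.TrPrsv} ($\vDash\phi\Rightarrow\mathbb{C}\Vdash\phi$) and the trivial first arrow already gives the full chain $\vDash\phi\Rightarrow\mathbb{C}\Vdash\phi\Rightarrow\vdash_{\SELA}\phi$, though for the statement of this theorem only the second and third links are needed.

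Since all the hard work has in fact been front-loaded, there is essentially no remaining obstacle: the canonical model being well-behaved is exactly Lemma \ref{Prop.Can}, whose most delicate cases are $(\textbf{Nm})$ and $(\textbf{Pr})$ — these rely on the finite-schema axioms $\NM$ and $\PR$ and on the use of the complete descriptions $\phi^o_i$ to pin down the valuation of each player; the consistency argument for $(\textbf{Pr})$ in particular exploits the $\DET$ axiom to collapse a conjunction of $\lr{b}$-formulas into $\lr{b}$ of a conjunction, so that a single witnessing action $b$ with $b\tot_i a$ can be found. The truth lemma itself is routine for normal multimodal logic once the $\K_i$- and $[a]$-modalities are handled by the usual existence-lemma argument. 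The one point worth flagging is that \SELA\ is \emph{not} closed under uniform substitution (the $p^a_i$ are special proposition letters constrained by \INFO\ and \KE), so one must be careful that the canonical construction validates the model-level conditions rather than frame conditions — but this is precisely what Lemma \ref{Prop.Can} checks, so nothing new is required here. The proof is therefore a short citation of Theorems \ref{Th.Sound}, \ref{Co.TrPrsv}, Lemmas \ref{lem.truth}, \ref{Prop.Can}, plus a one-line Lindenbaum argument.
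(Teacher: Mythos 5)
Your proposal is correct and follows exactly the paper's own route: soundness by chaining Theorem \ref{Th.Sound} with Theorem \ref{Co.TrPrsv}, and completeness by a Lindenbaum argument over the canonical model, using Lemma \ref{Prop.Can} to place $\N^C$ in $\mathbb{C}$ and the truth lemma (Lemma \ref{lem.truth}) to falsify the underivable formula. No gaps; the extra commentary on the $(\textbf{Nm})$/$(\textbf{Pr})$ cases and the failure of uniform substitution matches what the cited lemmas already establish.
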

\begin{proof}
Using Theorem \ref{Th.Sound} and Corollary \ref{Co.TrPrsv}, we have the soundness of $\SELA$. By Lemmas \ref{lem.truth} and \ref{Prop.Can} we obtain that $\SELA$ is complete.
\end{proof}

From Theorem \ref{Th.Sound}, Corollary \ref{Co.TrPrsv} and Theorem \ref{Th.Com} it follows:

\begin{theorem}
$\SELA$ is sound and complete w.r.t. $\lr{\LDEL, \mathbb{M},\vDash}$ where $\mathbb{M}$ is the collection all the epistemic models. 
\end{theorem}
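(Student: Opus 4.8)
The plan is to obtain this statement as an immediate corollary of three results already established: the soundness of $\SELA$ with respect to $\vDash$ (Theorem~\ref{Th.Sound}), the equivalence of the two semantics modulo validity (Corollary~\ref{Co.TrPrsv}), and the completeness of $\SELA$ with respect to $\lr{\LDEL,\mathbb{C},\Vdash}$ (Theorem~\ref{Th.Com}). No fresh construction is needed; the proof is a one-line composition of arrows. For soundness I would simply invoke Theorem~\ref{Th.Sound}: every theorem of $\SELA$ is true at every world of every epistemic model under $\vDash$, and since $\mathbb{M}$ is by definition the class of all epistemic models, this is precisely soundness w.r.t.\ $\lr{\LDEL,\mathbb{M},\vDash}$.

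For the completeness direction, suppose $\vDash\phi$, i.e.\ $\phi$ holds at every world of every epistemic model. By Corollary~\ref{Co.TrPrsv}, $\vDash\phi$ iff $\mathbb{C}\Vdash\phi$, so $\phi$ is valid on the class $\mathbb{C}$ of normal epistemic temporal models under the ETL-style semantics $\Vdash$. Theorem~\ref{Th.Com} then yields $\vdash_{\SELA}\phi$. Combining the two directions gives $\vDash\phi \iff \vdash_{\SELA}\phi$, as required.

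Since all the real effort has been spent elsewhere, there is no genuine obstacle in this statement; but it is worth recording where the weight lies. The substantive half is the semantic equivalence Corollary~\ref{Co.TrPrsv}: one direction uses that each generated run model $\F^\otimes(\M)$ is a normal epistemic temporal model (Proposition~\ref{Prop.RunisNem}) together with the agreement between an epistemic model and its generated run (Theorem~\ref{Th.UptoNem}); the other direction uses that every normal ETL model carries an induced game structure and an induced epistemic model with matching truth values (Theorem~\ref{TH.Eqv}), which itself rests on the $\G$-bisimulation invariance of $\LDEL$ (Proposition~\ref{Lem.TrPrsv}) and Lemma~\ref{Lem.Bis}. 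The completeness w.r.t.\ $\mathbb{C}$ (Theorem~\ref{Th.Com}) is then a canonical-model argument whose crux is Lemma~\ref{Prop.Can}, verifying that the canonical model satisfies the normality conditions $(\textbf{Det})$, $(\textbf{Exturn})$, $(\textbf{Info})$, $(\textbf{Ke})$, $(\textbf{Nm})$ and $(\textbf{Pr})$ --- the last two via the finite schemata $\NM$ and $\PR$. Given all of this, the final theorem follows by transitivity of the implications $\vDash\phi \Rightarrow \mathbb{C}\Vdash\phi \Rightarrow \vdash_{\SELA}\phi$ and the unconditional soundness of $\SELA$.
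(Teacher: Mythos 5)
Your proposal is correct and matches the paper exactly: the paper derives this theorem immediately from Theorem~\ref{Th.Sound}, Theorem~\ref{Co.TrPrsv} and Theorem~\ref{Th.Com} via the same chain $\vDash\phi \Rightarrow \mathbb{C}\Vdash\phi \Rightarrow \vdash_{\SELA}\phi$, with no further argument. Your accounting of where the real work lies (the semantic equivalence and the canonical-model normality lemma) is also an accurate reading of the paper's structure.
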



\section{Conclusions and future work}
\label{sec.con}
In this paper, we present an attempt to formally separate the game rules from the game runs, and to compute the latter from the earlier with extra assumptions of the players in terms of initial uncertainty, observability and knowledge update mechanism. In this way we can explicitly specify what information is given by the game and what information is obtained by the players during the plays, thus clarifying different interpretations of information sets in imperfect information games. We show that under certain (strong) conditions on the game structures, the distinction of game rules and runs is negligible, thus also explaining why it is sometimes harmless to mix the two in practice. We propose a logic to reason about game runs under two equivalent semantics based on the ideas of  \DEL\ and \ETL\ respectively. We also give a complete axiomatization of the logic to syntactically capture the assumptions that we made in the framework. 

\medskip 

As we mentioned in the introduction, our work is related to the GDL-II language proposed by  \cite{SchiffelT14}, which deserves a more detailed comparison since not much has been said connecting GDL with dynamic epistemic logic. GDL-II is a simple yet elegant language of specifying game rules with imperfect information. GDL-II is given a semantics to compute the corresponding game runs with uncertainty in a very similar way to the update product in this paper. In GDL-II, a player $i$ cannot distinguish two histories $h$ and $h'$ if $i$ \texttt{sees} (a special predicate) exactly the same information provided by the game along $h$ and $h'$ and the local actions in $h$ and $h'$ for $i$ are the same.\footnote{GDL-II allows concurrent actions where a global transition in game is induced by a tuple of local moves.} Under synchronicity, the indistinguishability at the next stage of the game runs only depends on the indistinguishability at the previous stage and the local actions, which implicitly assumes (variants of) perfect recall and no miracles. On the other hand, GDL-II does not distinguish the assumptions about the players and the information provided by the game and assume a unique starting state which is common knowledge. The initial uncertainty and the observational power of the players are encoded by game information and random moves. GDL-II models can be translated to various epistemic temporal models \cite{RuanT14,HuangRT13,RuanT12} where epistemic temporal languages can be used to talk about the properties of the games. 

\medskip

Although our approach employs specific designs about each component of the game  and the player assumptions, e.g., the update product $\otimes$ which presupposes synchronous perfect recall, we hope the readers can see the flexibility for our work to be adapted with the following features: 
\begin{itemize}
\item update mechanisms without perfect recall such as the ones discussed by  \cite{LG14};
\item explicit memory components in the epistemic models such as the sequences of actions or information pieces in \cite{Kaneko08}. Note that in our work the memory ability is implicitly characterized by the update mechanism thus to incorporate the more general memory components we need to revise the update mechanism accordingly with explicit memory checking;
\item the expected strategies to capture the rationality of the players, in order to strengthen the reasoning power of the players. The strategies may be encoded as well in the epistemic models similar to the local states with strategies in \cite{Halpern97Ab};
\item observation models with higher-order uncertainty, e.g., player 1 is sure what happened but he is not sure whether player 2 knows it. It can be handled  by copies of the same action with the help of a function from the `action copies' in the observation model to actions in $\Act$, just like the function $f$ in the epistemic model w.r.t.\ a game;
\item a logical language with ATL-like operators such as the ones used by  \cite{GH10} based on CIGS. 
\item a model checking algorithm based on game rules directly, like the one for a similar language in the setting of epistemic planning \cite{YLW15}. We do not need to compute the complete run model in order to check the truth value of the \LDEL\ formulas. However, if we strengthen the expressive power of the language with iteration or fixed point operators the model checking problem may become highly non-trivial (cf. e.g., \cite{AucherB13}).  
\end{itemize}
With some of those extensions in place, we may be able to discuss (new) solution concepts as explored in the interpreted system framework with knowledge-based programs by \cite{HalpernM07}.  

\medskip

Of course, there are many interesting problems to be discussed further. For example, although we try to separate the game rules from the player assumptions, sometimes the distinction is rather blurred. For example, game rules may also include assumptions about the observability of the players (e.g., they should not see others' cards), which can be encoded by the game information in our framework. Should we make them more explicit in the rules? Moreover, sometimes the initial uncertainty is caused by the game rules (e.g., drawing the cards from a shuffled deck), should we include it in the game rules instead of the player assumptions? Moreover, in our framework, each player gets some information even on the states which do not belong to him or her. However, does this extra information for the `currently irrelevant' players really play a role for reasonable solutions of the game? 

We close with a more technical question which we also leave for a future occasion: is our run model always regular modulo epistemic bisimulation? In the other words, will the computation of the next level start to cycle at some point? It is not the case in DEL in general, but our model and update are quite restricted which may give us some hope on the regularity of the computation.   













\bibliographystyle{apalike}
\bibliography{frtr}


\end{document}